\documentclass{article} %
\usepackage{iclr2027_conference,times}
\iclrfinalcopy

\usepackage{amsmath,amsfonts,bm}

\def\eqref#1{equation~\ref{#1}}

\def\1{\bm{1}}

\DeclareMathAlphabet{\mathsfit}{\encodingdefault}{\sfdefault}{m}{sl}
\SetMathAlphabet{\mathsfit}{bold}{\encodingdefault}{\sfdefault}{bx}{n}

\usepackage{hyperref}
\usepackage{url}
\usepackage{multirow}
\usepackage{booktabs}
\usepackage{xcolor}
\usepackage{placeins} %
\usepackage{xspace}

\usepackage{graphicx}
\usepackage{listings}
\usepackage{booktabs}
\usepackage{multirow}
\usepackage{enumitem}
\usepackage{subcaption}
\usepackage{caption}
\usepackage{framed}
\usepackage{fancybox}
\usepackage{tcolorbox}
\usepackage{wrapfig}

\usepackage{algorithm}
\usepackage{amssymb}
\usepackage{algpseudocode}

\usepackage{booktabs}
\usepackage{multirow} %
\usepackage{siunitx}
\colorlet{punct}{red!60!black}
\definecolor{background}{HTML}{EEEEEE}
\definecolor{delim}{RGB}{20,105,176}
\colorlet{numb}{magenta!60!black}

\lstdefinelanguage{json}{
    basicstyle=\scriptsize\ttfamily,
    showstringspaces=false,
    breaklines=true,
    frame=single,
    backgroundcolor=\color{white},
    literate=
     *{0}{{{\color{numb}0}}}{1}
      {1}{{{\color{numb}1}}}{1}
      {2}{{{\color{numb}2}}}{1}
      {3}{{{\color{numb}3}}}{1}
      {4}{{{\color{numb}4}}}{1}
      {5}{{{\color{numb}5}}}{1}
      {6}{{{\color{numb}6}}}{1}
      {7}{{{\color{numb}7}}}{1}
      {8}{{{\color{numb}8}}}{1}
      {9}{{{\color{numb}9}}}{1}
      {:}{{{\color{punct}{:}}}}{1}
      {,}{{{\color{punct}{,}}}}{1}
      {\{}{{{\color{delim}{\{}}}}{1}
      {\}}{{{\color{delim}{\}}}}}{1}
      {[}{{{\color{delim}{[}}}}{1}
      {]}{{{\color{delim}{]}}}}{1},
}

\usepackage{dramatist, etoolbox, enumitem}
\usepackage[framemethod=TikZ]{mdframed}
\makeatletter

\patchcmd{\speaker}{\item[#1\speaksdel]}{\item[\speaksfont#1]}{}{}
\patchcmd{\@character}{\item[#1\speaksdel]}{\item[\speaksfont#1]}{}{}
\makeatother

\title{JetSpec++: Foresight Parallel Drafting with Self Refinements}

\author{
    Zhaoxiang Feng\thanks{Equal contributions.}\textsuperscript{\hspace{1.5mm}}$^{1}$
    \hspace{1pt}
    Lanxiang Hu\footnotemark[1]\textsuperscript{\hspace{1.5mm}}$^{1}$
    \hspace{1pt}
    Haoran Yuan$^{2}$
    \hspace{1pt}
    Yu-Yang Qian$^{3}$
    \hspace{1pt}
    Jihui Yang$^{4}$
    \hspace{1pt}
    \And
    \hspace{1pt}
    Peng Zhao$^{3}$
    \hspace{1pt}
    Yibo Zhu$^{5}$
    \hspace{1pt}
    Tajana Rosing$^{1}$
    \hspace{1pt}
    Hao Zhang$^{1}$ \\
    $^1$UC San Diego
    \hspace{1pt}
    $^2$UIUC
    \hspace{1pt}
    $^3$Nanjing University
    \hspace{1pt}
    $^4$Meta
    \hspace{1pt}
    $^5$StepFun
}

\newcommand{\sysname}{JetSpec++\xspace}

\usepackage{amsthm}\newtheorem{proposition}{Proposition}\newtheorem{assumption}{Assumption}\theoremstyle{remark}\newtheorem{remark}{Remark}\theoremstyle{plain} %

\begin{document}

\maketitle

\begin{abstract}

Speculative decoding's effectiveness is governed by two quantities: the acceptance rate of drafted tokens and the cost of producing them. We show the two objectives can be co-optimized with algorithm-system co-design, and present \sysname, which retains a strong learned draft model while hiding its cost almost entirely. \sysname combines three components: intra-block self refinement, which trains the draft model to iteratively correct its own parallel predictions within a block of tokens; inter-block foresight drafting, which begins the next block before the current one is verified; and pipelined drafting and verification, which overlaps the two stages so drafting's marginal wall-clock cost approaches zero. %
In comparison with state-of-the-art speculative decoding methods, \sysname attains a wall-clock speedup of up to 7.45$\times$ over autoregressive (AR) decoding and 1.54$\times$ over DSpark, with up to 1,361 tokens per second for a single request on GB200, and an average acceptance length of 8.94 across seven benchmarks, while preserving the target model's output distribution. Our code and models are available at: \textcolor{blue}{https://github.com/hao-ai-lab/JetSpec}.

\end{abstract}

\section{Introduction}\label{sec:intro}

Speculative decoding (SD) is a widely adopted technique for parallelizing large language model (LLM) generation \citep{leviathan2023fast,chen2023accelerating}. Its effectiveness is determined by two factors, the acceptance rate and the drafting cost. Earlier work showed that the target model's hidden states of the verified context can align the draft model with the target distribution and thereby raise the acceptance rate \citep{li2024eagle,li2025eagle}. DFlash employs parallel drafting and has further reduced per-token drafting cost \citep{chen2026dflash}, providing bigger headroom for higher throughput \citep{cheng2026dspark} and lower latency \citep{hu2026jetspec}. A natural question to ask is whether we can further reduce drafting cost to zero on the critical path. 

One solution is to further push down the drafting cost. Cheap drafting usually relies on model-free techniques such as prompt lookup decoding \citep{saxena2023pld,somasundaram2024pld} or retrieval from the text generated so far \citep{oliaro2024suffixdecoding,hu2025sam}. However, model-free methods show high variance and low average acceptance rates, particularly when reusable patterns are sparse or fail to generalize. The question then becomes, can we retain the high acceptance rate of model-based drafting, while removing drafting cost from the draft-then-verify critical path?

The drafting cost can be removed from the critical path by overlapping drafting for the next round with verification in the current round (Figure~\ref{fig:1_intro-teaser}). However, 
for a learned draft model conditioned on the target model’s hidden states, the key challenge is that it must draft ahead before the next verified context becomes available~\citep{zhang2025swiftspec}. Accurately predicting this future context is therefore critical to maintaining draft quality.

\begin{figure}[t]
\centering
\includegraphics[width=\textwidth]{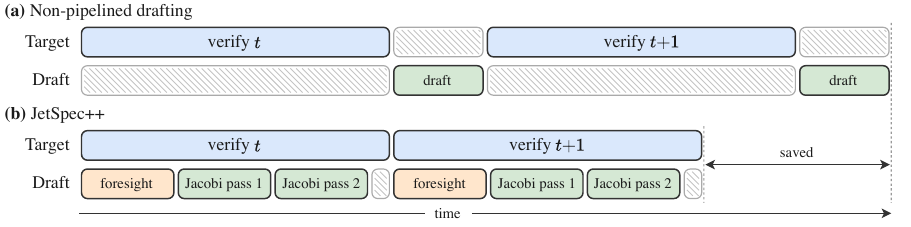}
\caption{\textbf{Non-pipelined drafting versus \sysname.} (a) A draft model that reads the target's hidden states drafts only after verification. (b) \sysname runs foresight and Jacobi passes while block $t$ is verified. Segment lengths are schematic.}
\label{fig:1_intro-teaser}
\end{figure}

We present \sysname, which maintains a target-conditioned parallel draft model and overlaps drafting for the next step with verification of the current step. To draft block $t{+}1$ while block $t$ is still being verified, however, the draft model must operate without two pieces of information that become available only after verification: (1) the number of accepted draft tokens in block $t$ and the corresponding bonus token, which becomes the \emph{anchor}, the first token of block $t{+}1$, and (2) the target hidden states for the newly verified context in block $t$.

\sysname addresses both missing components through \emph{Foresight Drafting}. Rather than waiting for the verification outcome, the draft model anticipates multiple possible ``world lines,'' each corresponding to a different acceptance-length outcome. For each request, a hidden-to-hidden adapter predicts the target hidden states of block $t$, which is still being verified. A lightweight decoder then maps these predicted states to possible anchor tokens. The predicted hidden states and anchor token are then used to draft block $t{+}1$.

Because drafting runs concurrently with target verification, additional draft-model computation incurs no extra latency as long as it remains within the verification window. \sysname exploits this otherwise idle compute by augmenting a causal parallel draft model built on the architecture of JetSpec~\citep{hu2026jetspec} with Jacobi-style self-refinement, allowing the draft model to iteratively revise its parallel predictions over multiple passes. Each additional refinement pass progressively moves the draft distribution toward a more autoregressive factorization with better target alignment. Between two consecutive passes of the draft model, \sysname additionally simulates verification with the adapter and the decoder as a proxy for the target. A \emph{context refresh} re-predicts the target states of the draft, an \emph{anchor replacement} replaces the first draft token that disagrees with the decoder, and the next pass refines the rest of the block (Section~\ref{sec:43}).
In summary:

\begin{itemize}
\item We propose \emph{Foresight Drafting}, which enables draft-verification pipelining for modern speculative decoding whose learned draft models depend on target-model hidden states. 
It anticipates possible verification outcomes and predicts the corresponding target hidden states and anchor tokens, thereby drafting the next block before the current block is verified.
\item We introduce Jacobi-style self-refinement with context refresh and anchor replacement for a causal parallel draft model\ that builds on JetSpec's architecture and is trained to refine its own draft (Section~\ref{sec:43}), exploiting otherwise hidden drafting compute within the verification window to progressively strengthen causal token dependence and improve target alignment over multiple passes.
\item We evaluate \sysname on seven benchmarks, where at verification budget 16 it attains the longest acceptance length of all methods on every benchmark at temperatures 0 and 1, with an average of 8.94 tokens per round against 6.89 for DSpark at temperature 0 (Table~\ref{tab:1}).
\item We integrate \sysname into a serving engine with CUDA graphs\ and the draft model on a second GPU, where its end-to-end speedup is up to 7.45$\times$ and 1.54$\times$ over AR and DSpark,
at up to 1,361 tokens per second for a single request, and \sysname cuts the per-round overhead beyond verification by up to 84\% relative to DSpark (Table~\ref{tab:2}).

\end{itemize}

\section{Related Work}\label{sec:related}

\textbf{Speculative Decoding.} Speculative decoding drafts several tokens and verifies them in one target forward pass, and its rejection-sampling rule leaves the output distribution unchanged~\citep{leviathan2023fast,chen2023accelerating}. Its speedup grows with the number of accepted tokens and shrinks with the cost of drafting them. To accept more tokens per pass, tree verification scores many candidate branches at once~\citep{
miao2024specinfer,chen2024sequoia,li2024eagle2}, 
and knowledge distillation aligns the draft model with the target~\citep{zhou2023distillspec,liu2024online_speculative_decoding,goel2024direct}. 
Medusa~\citep{cai2024medusa} predicts future positions from the last hidden state, 
and EAGLE~\citep{li2024eagle}, EAGLE-3~\citep{li2025eagle} and HASS~\citep{zhang2025hass} draft from the target's features.
To draft cheaply, parallel drafting reduces draft cost by predicting an entire block in one forward pass~\citep{an2026pard,xiao2024parallelspec}. Recent methods further improve parallel drafting alignment by conditioning on target hidden states~\citep{chen2026dflash}, but their parallel per-position predictions are not conditioned on the earlier realized draft tokens within the block~\citep{wang2026presto}. Sequential heads~\citep{cheng2026dspark,huang2026domino,zheng2026dels} restore this dependency at the cost of serial computation. DSpine~\citep{zhang2026dspine} instead writes each position's predicted feature into its successor after every layer, so the dependency forms through depth within one forward pass.
JetSpec~\citep{hu2026jetspec} introduces causal conditioning over hidden states\ within the block, while xPress~\citep{wang2026xpress} trains a separate refiner and refines the draft by Jacobi iteration. \sysname{} instead lets the draft model refine its own block. 

\textbf{Parallel Decoding.} Parallel decoding lets a model itself generate several of its own tokens per forward pass, without a separate draft model. Jacobi decoding updates every position of a block by fixed-point iteration and converges to the greedy output~\citep{song2020accelerating,santilli2023accelerating}. Lookahead decoding~\citep{fu2024break} collects $n$-grams from the Jacobi trajectory and verifies them, while consistency distillation~\citep{kou2024cllms} and Jacobi forcing~\citep{hu2025fast} train an AR model to reach the fixed point in fewer iterations. 
Discrete diffusion models generate text by denoising all positions in parallel over several steps~\citep{sahoo2024mdlm,nie2025llada,ye2025dream7b}. Block-wise discrete diffusion denoises one block at a time and conditions each block on the earlier ones~\citep{arriola2025block, wu2025fastdllmv2}, while caching~\citep{wu2025fast_dllm}, pseudo-trajectory distillation~\citep{qian2026d3llm} and multi-block decoding~\citep{wang2025diffusion_forcing} further accelerate diffusion inference. \sysname applies Jacobi iteration to the draft model rather than the target. 

\textbf{Draft-Verification Pipelining.}
Prior work hides drafting latency by running the draft model asynchronously with target verification~\citep{liu2024pearl,mcdanel2025amusd}.
Because the verification outcome is not yet known, recent methods hedge against multiple  outcomes: SpecBranch~\citep{shen2025specbranch} explores alternative branches, while speculative speculative decoding (SSD)~\citep{kumar2026speculative} prepares drafts for likely verification outcomes. SwiftSpec~\citep{zhang2025swiftspec}  uses asynchronous tree drafting to remove draft latency from the critical path. 
Target-conditioned draft models make pipelining harder because the representation of the context under verification is unavailable. Prior approaches obtain an early target-side signal from partial-depth computation~\citep{bhendawade2025mirror,venkatesha2025fast,yu2026speculative}, and SSD substitutes the draft model's own activations for it~\citep[Appendix~E]{kumar2026speculative}. D$^2$SD~\citep{zhang2026d} forms a distribution over the number of accepted tokens $k$ from the draft model's own confidences, and a second draft model redrafts the block from the most likely prefix lengths, without pipelining. DPara~\citep{liu2026dpara} pipelines a DSpark~\citep{cheng2026dspark} draft model without a fallback by precomputing its backbone for every possible $k$, so the backbone reads the tokens under verification without their target states. In contrast, \sysname predicts the missing target hidden states directly, enabling a target-conditioned parallel draft model to speculate ahead without 
executing the target model.

\section{Preliminaries}\label{sec:prelim}

An AR model generates one token per forward pass, and each pass is bound by memory bandwidth, %
so compute sits idle.  Speculative decoding (SD)~\citep{leviathan2023fast,chen2023accelerating} uses this idle compute to verify several drafted tokens in one forward pass. A cheap draft model $q$ proposes $\gamma$ tokens $d_1, \dots, d_\gamma$ after the verified context $x$, and the target model $p$ scores all of them in a single forward pass. From left to right, draft token $d_i$ is accepted with probability $\min\{1, p(d_i \mid x, d_{<i}) / q(d_i \mid x, d_{<i})\}$.  At the first rejection the rest of the draft is discarded and the target model samples a token of its own, the \textbf{bonus token}, from the residual distribution $\propto \max\{p - q, 0\}$. If every draft token is accepted, the bonus token is sampled from $p$ at position $\gamma + 1$. This procedure is lossless: it preserves the target model's generation distribution exactly.

Each round commits the accepted prefix plus one bonus token, and its expected number of committed tokens is the \textbf{acceptance length} $\tau$. A round costs one drafting step and one verification step, which take times $T_{\text{draft}}$ and $T_{\text{verify}}$.  The per-committed-token latency is therefore $L = (T_{\text{draft}} + T_{\text{verify}})/\tau$, so a more accurate draft model raises $\tau$ and a faster draft model lowers $T_{\text{draft}}$. 

\subsection{Parallel Drafting}\label{sec:31}

An AR draft model needs $\gamma$ sequential forward passes, so $T_{\text{draft}}$ grows with $\gamma$.  
Modern parallel draft models such as DFlash take the previous bonus token as the \textbf{anchor} $y_0$. From here on, $x$ denotes the verified context before $y_0$. Conditioned on the target hidden states $h_x$ of $x$, these models fill the other $\gamma$ positions with mask tokens, and predict all $\gamma$ positions  in one forward pass of the draft model $q_\theta$ with parameters $\theta$~\citep{chen2026dflash}: 

\begin{equation}
q_\theta(y_1, \dots, y_\gamma \mid x, h_x, y_0) \;=\; \prod_{i=1}^{\gamma} q_\theta(y_i \mid x, h_x, y_0)
\label{eq:1}
\end{equation}

$T_{\text{draft}}$ then barely depends on $\gamma$,  but each position is predicted without the tokens chosen at the other positions of the block. JetSpec~\citep{hu2026jetspec} makes the draft model causal, so in one forward pass every position attends to the hidden states at its own position and at the positions before it.
We build on this architecture to enable causal token factorization via multi-pass Jacobi iterations.

\subsection{Jacobi Decoding for Parallel Drafting}\label{sec:32}

Parallel decoding lets an AR LLM decode a $\gamma$-token block of its own greedy output, $y_i = \arg\max_v p(v \mid x, y_{<i})$ for $i = 1, \dots, \gamma$, by updating all $\gamma$ positions in  one forward pass. Jacobi decoding~\citep{song2020accelerating,santilli2023accelerating} formulates this process as solving these $\gamma$ equations by fixed-point iteration, updating every position from the previous iteration $r$:

\begin{equation}
y_i^{(r+1)} \;=\; \arg\max_{v}\; p\big(v \mid x,\, y^{(r)}_{<i}\big), \qquad i = 1, \dots, \gamma
\label{eq:2}
\end{equation}

Under a causal mask, each Jacobi iteration requires a single forward pass. The resulting iterates form a \textbf{Jacobi trajectory} that converges to the AR fixed point under greedy decoding~\citep{santilli2023accelerating}. 
\sysname trains the draft model on both fully noised inputs and partially converged blocks (Section~\ref{sec:43}).

\section{Methods}\label{sec:methods}

In this section, we describe how \sysname works. Draft-verification pipelining (Section~\ref{sec:41}) overlaps drafting with verification, reducing the critical-path latency for decoding and yielding the throughput model in Eq.~\ref{eq:4}. The challenge is that, in a pipelined round, the draft model must begin before the verification outcome is available. \sysname addresses this with \emph{Foresight Drafting} (Section~\ref{sec:42}), which learns predictive features that recover most of the acceptance-length loss (Appendix~\ref{app:adapter}). Finally, intra-block causal self-refinement (Section~\ref{sec:43}) makes use of the bubble exposed by pipelining to further increase the acceptance length. Between two passes of the draft model, a context refresh and an anchor replacement rerun the adapter and the decoder of Foresight Drafting to correct the draft.

\subsection{Draft-Verification Pipelining}\label{sec:41}

Draft-verification pipelining drafts block $t+1$ while block $t$ is being verified. 
In SSD's terms~\citep{kumar2026speculative}, the draft process anticipates the \textbf{verification outcome} $(k, a)$, the number $k$ of accepted draft tokens together with the bonus anchor token $a$, and fills a \textbf{speculation cache} with one pre-drafted block per anticipated outcome. When verification returns, the realized $(k, a)$ is matched. A \textbf{cache hit} makes the matching pre-drafted block the next block, and a \textbf{cache miss} falls back to an ordinary speculative step. Every committed token is verified by the rejection sampling acceptance rule~\citep{leviathan2023fast},
so the output distribution is unchanged.

\begin{figure}[t]
\centering
\includegraphics[width=\textwidth]{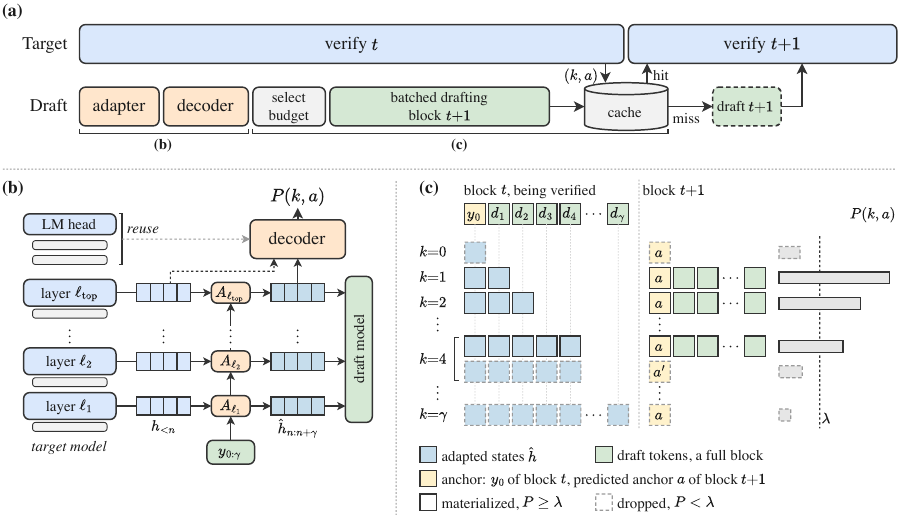}
\caption{
\textbf{Illustration of Foresight Drafting.} (a) 
\emph{Foresight Drafting} drafts the set $S$ of selected outcomes (Eq.~\ref{eq:8}) in one batch with $J{+}1$ passes.
(b) At each target layer $\ell$ that the draft model reads, the adapter attends to the verified prefix states $h^{\ell}_{<n}$ and predicts the current-block states $\hat h^{\ell}_{n:n+\gamma}$. A decoder converts the top-layer hidden state predictions into the outcome distribution $P(k,a)$ of Eq.~\ref{eq:6}. (c) The speculation cache for block $t{+}1$ contains outcome $k$, the draft tokens $d_1, \dots, d_k$ that outcome $k$ accepts, the corresponding adapted states $\hat h$ the draft model will read, and the decoded bonus token $a$ as the anchor of block $t{+}1$. An outcome whose calibrated score (Appendix~\ref{app:details}) reaches the threshold $\lambda$ will be pre-drafted as a $\gamma$-token block.
}
\label{fig:4_methods-1-overview}
\end{figure}

\emph{Foresight Drafting} (Section~\ref{sec:42}) predicts the  target hidden states that are missing until verification ends and estimates the distribution of the outcome from the prediction (Figure~\ref{fig:4_methods-1-overview}). The draft process of Figure~\ref{fig:4_methods-1-overview} (a) is hidden when

\begin{equation}
T_{\text{draft}} = T_{\text{pred}}(1) + R\, T_{\text{pred}}(|S|) + (J + 1)\, T_{\text{pass}}(|S|) 
+ T_{\text{host}} \;\le\; T_{\text{verify}}
\label{eq:3}
\end{equation}

where  $T_{\text{pred}}(\cdot)$ is the time of one  call of the adapter and the decoder  over the given number of blocks, one for the foresight call on the block under verification and $|S|$ for each context refresh on the drafts of the selected outcomes, $T_{\text{pass}}(|S|)$ is the time of one batched draft-model forward over the set $S$ of selected outcomes, $J$ is the number of Jacobi refinement passes, $R$ is the number of context refreshes (Section~\ref{sec:43}), 
and $T_{\text{host}}$ is the time of the CPU work between GPU phases. 
A choice of more selected outcomes raises $T_{\text{pass}}(|S|)$, so under Eq.~\ref{eq:3} it leaves fewer refinement passes $J$.

\textbf{Throughput Model.} 
Through draft-verification pipelining, on a hit the critical path overhead reduces to $\max(T_{\text{verify}}, T_{\text{draft}}) + T_{\text{comm}}$, where $T_{\text{comm}}$ is the communication time between the two processes.
On a miss, the draft model drafts a fallback block from the verified context in one pass (the initial draft $J_0$ of Section~\ref{sec:43}), which adds $T_{\text{fallback}}$ to the critical path, and the same round verifies it. %
With $p_{\text{hit}}$ being cache hit probability, and $\tau_{\text{hit}}$ and $\tau_{\text{miss}}$ being pre-drafted block and fallback block acceptance lengths, we adapt the speedup model of SSD~\citep{kumar2026speculative} and write

\begin{equation}
\text{throughput} \;=\; \frac{p_{\text{hit}}\, \tau_{\text{hit}} + (1 - p_{\text{hit}})\, \tau_{\text{miss}}}{\max\!\big(T_{\text{verify}},\, T_{\text{draft}}\big) + T_{\text{comm}} + (1 - p_{\text{hit}})\, T_{\text{fallback}}}
\label{eq:4}
\end{equation}

In the numerator, since $\tau_{\text{miss}}$ is smaller than $\tau_{\text{hit}}$, throughput grows with $p_{\text{hit}}$. Foresight drafting (Section~\ref{sec:42}) raises $p_{\text{hit}}$ while keeping $\tau_{\text{hit}}$ close to its value under true hidden states (Section~\ref{sec:exp}), and Jacobi iterations (Section~\ref{sec:43}) raise $\tau_{\text{hit}}$.

\textbf{Placement.} \sysname targets per-request latency. Because the two processes are separate, they can run on separate devices, as in SSD~\citep{kumar2026speculative}.

\subsection{Inter-Block Foresight Drafting}\label{sec:42}

Foresight drafting consists of an adapter, a decoder, and a draft-model backbone, where data flow is illustrated in Figure~\ref{fig:4_methods-1-overview}. %
Under each possible outcome $k$, positions $1, \dots, k$ of block $t$ hold the known draft tokens $d_1, \dots, d_k$, so the missing target-side hidden states belong to known tokens, and the state at each position $j \le k$ is predictable by a learned map from the tokens up to position $j$.
This allows training a specialized adapter to predict the hidden state at each position $j$ of block $t$. %

\textbf{Hidden-to-Hidden Adapter.} The adapter predicts, in one parallel pass, the target hidden states of all $\gamma + 1$ positions of the block under verification.
At each target layer $\ell$ that the draft model reads, the adapter attends to the cached prefix at that layer and predicts the hidden states of the new future positions:

\begin{equation}
\hat h^{\ell}_{n+j} \;=\; A_\ell\!\big(\, h^{\ell}_{< n},\; \hat h^{\ell^-}_{n:n+j} \,\big), \qquad j = 0, \dots, \gamma
\label{eq:5}
\end{equation}

Here $n$ is the position of the anchor $y_0$, the bonus token of the latest verification. $A_\ell$ is the adapter module for layer $\ell$, a short stack of transformer layers copied from the target (Appendix~\ref{app:adapter}).
$h^{\ell}_{< n}$ is the target's cached context states at layer $\ell$, the layer-$\ell$ slice of $h_x$ in Eq.~\ref{eq:1}, and  $\hat h^{\ell^-}$ is the adapter's prediction at the next lower target layer that the draft model reads, or the block's token embeddings for the lowest layer.
The adapter is trained separately from the draft model. During training, each module $A_\ell$ reads the prediction of the module below it $\hat h^{\ell^-}_{n:n+j}$, as at inference, and the adapter is trained using a Smooth L1 loss plus a cosine distance between its predictions and the target's true hidden states $h^{\ell}_{n:n+j}$, concatenated over the layers that the draft model reads
(Appendix~\ref{app:adapter}).

\textbf{Decoder.} The decoder turns the adapter's predictions into the outcome distributions. The decoder reuses frozen weights from the target's last few layers, above the deepest layer the adapter predicts, followed by the target's final norm and LM head. It attends to the verified context through keys and values computed from the target's true states (Appendix~\ref{app:details}).
The decoder then runs under a causal mask over the adapter's top-layer predictions $\hat h_n, \dots, \hat h_{n+\gamma}$. Position $j$ yields $\hat p(\cdot \mid j)$, an estimate of the target's next-token distribution after the prefix up to $j$. The estimate at position $j-1$ assigns draft token $d_j$ the probability $\hat p_j = \hat p(d_j \mid j-1)$. 
For each $k < \gamma$,

\begin{equation}
P(k) = \hat p_1 \cdots \hat p_k \,(1 - \hat p_{k+1}), \qquad P(k, a) \;=\; \hat p_1 \cdots \hat p_k \; \hat p(a \mid k), \quad a \neq d_{k+1}
\label{eq:6}
\end{equation}

If all $\gamma$ draft tokens are accepted, then $P(\gamma)=\prod_{j=1}^{\gamma}\hat p_j$ and $P(\gamma, a) = P(\gamma)\,\hat p(a \mid \gamma)$.
One decoder pass therefore scores every outcome at once, since row $j$ serves both the acceptance of $d_{j+1}$ and the bonus distribution at $k = j$ (Figure~\ref{fig:8_appendix-A2-outcome-matrix} in Appendix~\ref{app:decoder}). 
We call the decoder \emph{perfect} if its estimate $\hat p(\cdot \mid j)$ equals the target's next-token distribution at the sampling temperature after every prefix $j$ of the draft. With a perfect decoder, Eq.~\ref{eq:6} is the exact distribution of the outcome under speculative sampling at any temperature, for any draft that is fixed before verification, such as the Jacobi-refined argmax draft of Eq.~\ref{eq:9} (see Appendix~\ref{app:prop1}). In Appendix~\ref{sec:D1} (Table~\ref{tab:D1}), we show that the current design closely approximates the perfect decoder.

\textbf{Outcome Selection.} Given the outcome distributions, the draft process still needs to select the outcomes to pre-draft. 
Due to limited draft-side compute, the draft process selects, computes and caches a finite set of $B$ most promising outcomes $(k,a)$ of Eq.~\ref{eq:6} and $(\gamma,a)$. Outcomes are disjoint events, so the hit probability of a cache is the sum of its entries' probabilities:

\begin{equation}
S^\star \;=\; \arg\max_{|S| \le B} \; \sum_{(k,a) \in S} P(k, a)
\label{eq:8}
\end{equation}

The optimum is the $B$ most probable outcomes. In practice, decoder scores are imperfect, especially at longer lengths where errors compound through Eq.~\ref{eq:6}, so we calibrate the outcome scores on held-out prompts and retain candidates above a split-conformal threshold $\lambda$~\citep{angelopoulos2021a}, up to cache size $B$. The threshold is fit offline to include the realized outcome at a target rate on held-out rounds. At inference, this threshold can yield a variable number of cached outcomes across rounds. For simplicity, our experiments  instead  cache the $B$ highest-scoring outcomes in every round (Appendix~\ref{app:details}). Calibration affects only which outcomes are pre-drafted, never the verified output (Appendix~\ref{app:refresh}). Unlike SSD~\citep{kumar2026speculative}, which fixes how many anchors each value of $k$ receives, Eq.~\ref{eq:8} selects the cache dynamically from each round's outcome scores. With a perfect decoder, this selection is at least as good as any fixed allocation in every round (Proposition~\ref{prop:2}, Appendix~\ref{app:prop2}).

\textbf{Batched Drafting.} Having selected the outcomes to predict, each outcome $(k, a)$ is then drafted in the same way as an ordinary block. Its anchor is $a$, its context ends with $d_1, \dots, d_k$, and the draft model reads the adapter's predictions at the positions up to $k$ in place of the target's hidden states. The predicted states are the only approximation, and verification keeps the output lossless whatever they are. All selected outcomes are drafted as one batch, with the refinement passes of Section~\ref{sec:43}.

\subsection{Intra-Block Self Refinements}\label{sec:43}

A parallel draft model marginalizes each position over its possible predecessors, so its marginals are what PRESTO~\citep{wang2026presto} calls \emph{prefix-blind}. DSpark~\citep{cheng2026dspark} gives an example: when the context admits \emph{``of course''} and \emph{``no problem''}, the block may read \emph{``of problem''}. 
Sequential heads and separate refiners repair prefix-blindness at the cost of a sequential or recurrent module. We instead let the draft model refine its own draft in parallel, with no  module beyond those of Foresight Drafting, in the time that pipelining frees under verification. Self-refinement combines three operations, Jacobi iterations of the draft model, context refresh and anchor replacement.

\textbf{Jacobi Iterations.} Eq.~\ref{eq:1} factorizes the block into per-position marginals, whereas the target generates it token by token, $p(y_{1:\gamma} \mid x, y_0) = \prod_{i=1}^{\gamma} p(y_i \mid x, y_{<i})$. \sysname makes the draft model follow this causal, token-level factorization through Jacobi iterations. Our draft model takes \emph{natural-language tokens}, not mask embeddings, as the inputs, and attends causally within the block (Section~\ref{sec:31}), so each position can condition on the tokens drafted before it. Refinement is a Jacobi iteration (Section~\ref{sec:32}) of the draft model on its own draft:

\begin{equation}
y_i^{(r+1)} \;=\; \arg\max_{v}\; q_\theta\!\big(v \,\big|\, x,\; h_x,\; y^{(r)}_{\le i}\big), \qquad i = 1, \dots, \gamma
\label{eq:9}
\end{equation}

Here $y^{(r)}$ is the block after $r$ iteration passes, and $J_r$ names a round that stops at it, so $J$ in Eq.~\ref{eq:3} counts these passes. $J_0$ is the initial draft, whose inputs are a fully noised block of tokens drawn from the verified context, and $y_0$ is the anchor, the verified bonus token on a miss or the guessed $a$ of a pre-drafted block. Causal attention lets position $i$ read $y_{\leq i}$ for each iteration.
The block therefore moves from the factorization of Eq.~\ref{eq:1} toward an explicit causal token factorization, in which each position conditions on the drafted tokens $y_{<i}$ before it. %

\textbf{Context Refresh.} Jacobi iterations alone bring diminishing returns in acceptance length (Table~\ref{tab:6}, $R = 0$). Between two consecutive passes, \sysname therefore simulates verification on the current draft with the adapter and the decoder of Foresight Drafting. In a \emph{context refresh}, the adapter re-predicts the target hidden states of the current draft $y^{(r)}$. Block $t$ is still under verification, so both modules read the foresight predictions at its positions up to $k$ in place of true states. Like the states of Foresight Drafting, the refreshed states are predictions, because the target has not yet read the draft.

\textbf{Anchor Replacement.} The decoder then reads the refreshed states and finds the first position at which its most likely token differs from the draft. An \emph{anchor replacement} fixes the draft tokens before that position for the rest of the round, writes the decoder's token $\hat a$ at it, and fixes that position too. Like the anchor $a$ of a pre-drafted block, $\hat a$ is a guess of the token that the target would emit there. The next pass redrafts only the positions after the last fixed one (Appendix~\ref{app:refreshexp}). A round runs at most one context refresh, with its anchor replacement, between two consecutive passes, so it has $R \le J$ context refreshes. Both operations change only the proposal, so verification keeps the output lossless (Appendix~\ref{app:refresh}).

\textbf{Training.} A draft model trained only on masked blocks is never trained to condition on tokens at its inputs. Following Jacobi forcing~\citep{hu2025fast}, we train on blocks that pair a clean prefix with a noisy suffix, the shape of a partially converged Jacobi iterate. The first $m$ positions hold the true tokens and the rest hold noise tokens $\tilde y$, which we draw from the sequence's own context rather than from Jacobi trajectories. We train the draft model to match the target model at every position:

\begin{equation}
\mathcal L \;=\; \mathbb{E}_{m}\sum_{i=1}^{\gamma} \mathcal{D}\!\Big(p(\cdot \mid x, y_{<i}) \,\Big\|\, q_\theta\big(\cdot \mid x, h_x, [\,y_{0:m};\, \tilde y_{m+1:\gamma}\,]_{\le i}\big)\Big)
\label{eq:10}
\end{equation}

where $\mathcal{D}(\cdot\,\|\,\cdot)$ denotes a divergence. We use forward KL, which outperforms reverse KL and cross-entropy SFT in our ablations (Section~\ref{sec:54}).
The clean prefix length $m$ is sampled per block from a fixed schedule (Appendix~\ref{app:training}), so the model is trained to repair any clean-prefix block in Jacobi decoding. %
Appendix~\ref{app:training} provides our clean-prefix selection rule and details on our training setup.

\section{Experiments}\label{sec:exp}

\subsection{Setup}\label{sec:51}

\begin{table}[t]
\centering
\small
\setlength{\tabcolsep}{4pt}
\caption{Acceptance length $\tau$ at verification budgets 16 and 32 for Qwen3-8B, in the setup of Section~\ref{sec:51}. JetSpec++ runs $J_1$ with one context refresh and anchor replacement ($J_1$-$R_1$) and $J_3$ with three of each ($J_3$-$R_3$), and caches the $B = 16$ highest-scoring outcomes per round (Appendix~\ref{app:details}).}
\label{tab:1}
\resizebox{\ifdim\width>\linewidth\linewidth\else\width\fi}{!}{\begin{tabular}{lc*{7}{S[table-format=2.2]}}
\toprule
Method & Budget & {GSM8K} & {MATH-500} & {AIME25} & {HumanEval} & {MBPP} & {LCB} & {MT-Bench} \\
\midrule
\multicolumn{9}{c}{Temperature = 0} \\
\midrule
\multirow{2}{*}{DFlash} & 16 & 5.91 & 7.59 & 6.89 & 5.16 & 4.91 & 5.45 & 3.24 \\
 & 32 & 5.18 & 6.67 & 6.07 & 4.53 & 4.31 & 4.70 & 2.95 \\
\midrule
\multirow{2}{*}{DSpark} & 16 & 7.31 & 9.01 & 8.35 & 6.57 & 6.17 & 6.92 & 3.89 \\
 & 32 & 7.06 & 8.66 & 7.73 & 6.25 & 5.85 & 6.38 & 3.77 \\
\midrule
\multirow{2}{*}{Domino} & 16 & 6.91 & 8.74 & 8.09 & 6.06 & 5.90 & 6.27 & 3.74 \\
 & 32 & 6.25 & 7.96 & 7.37 & 5.61 & 5.29 & 5.92 & 3.49 \\
\midrule
\multirow{2}{*}{JetSpec} & 16 & 5.90 & 7.43 & 6.78 & 5.28 & 4.92 & 5.45 & 3.17 \\
 & 32 & 6.02 & 7.81 & 7.02 & 5.34 & 4.96 & 5.53 & 3.19 \\
\midrule
\multirow{2}{*}{JetSpec++, $J_1$-$R_1$} & 16 & 7.98 & 9.63 & 8.97 & 7.23 & 6.73 & 7.59 & 4.38 \\
 & 32 & 8.77 & 11.45 & 10.42 & 7.70 & 7.02 & 8.29 & 4.52 \\
\midrule
\multirow{2}{*}{JetSpec++, $J_3$-$R_3$} & 16 & 9.78 & 11.04 & 10.49 & 8.79 & 8.24 & 8.95 & 5.29 \\
 & 32 & 11.83 & 14.55 & 13.13 & 10.27 & 9.25 & 10.50 & 5.51 \\
\midrule
\multicolumn{9}{c}{Temperature = 1} \\
\midrule
\multirow{2}{*}{DFlash} & 16 & 5.19 & 5.37 & 4.25 & 4.26 & 4.24 & 4.78 & 2.81 \\
 & 32 & 4.70 & 4.85 & 4.16 & 3.97 & 3.81 & 4.20 & 2.64 \\
\midrule
\multirow{2}{*}{DSpark} & 16 & 6.10 & 6.13 & 4.69 & 5.32 & 5.18 & 6.05 & 3.26 \\
 & 32 & 5.99 & 6.03 & 4.67 & 5.08 & 4.91 & 5.54 & 3.17 \\
\midrule
\multirow{2}{*}{Domino} & 16 & 5.86 & 5.94 & 4.79 & 4.97 & 4.96 & 5.50 & 3.16 \\
 & 32 & 5.47 & 5.87 & 4.62 & 4.68 & 4.47 & 5.27 & 2.96 \\
\midrule
\multirow{2}{*}{JetSpec} & 16 & 5.24 & 5.45 & 4.30 & 4.39 & 4.41 & 4.95 & 2.81 \\
 & 32 & 5.34 & 5.62 & 4.30 & 4.47 & 4.29 & 4.66 & 2.76 \\
\midrule
\multirow{2}{*}{JetSpec++, $J_1$-$R_1$} & 16 & 6.88 & 6.50 & 5.25 & 5.84 & 5.53 & 6.55 & 3.51 \\
 & 32 & 7.33 & 7.24 & 5.48 & 6.07 & 5.80 & 7.01 & 3.59 \\
\midrule
\multirow{2}{*}{JetSpec++, $J_3$-$R_3$} & 16 & 8.10 & 7.18 & 5.59 & 6.50 & 6.31 & 7.21 & 4.07 \\
 & 32 & 9.41 & 8.53 & 5.77 & 7.33 & 6.95 & 7.81 & 4.09 \\
\bottomrule
\end{tabular}}
\end{table}

\textbf{Models and baselines.} The target models are Qwen3-8B~\citep{yang2025qwen3_technical_report} and Qwen3.8-27B~\citep{qwen2026qwen38}, both with thinking mode disabled. We compare \sysname with DFlash~\citep{chen2026dflash}, DSpark~\citep{cheng2026dspark}, Domino~\citep{huang2026domino}, JetSpec~\citep{hu2026jetspec} and SSD~\citep{kumar2026speculative}, the last with its Qwen3-0.6B draft model. For each target, all draft models except SSD's are trained on the same 800k sequences regenerated by that target, with the recipes of Appendix~\ref{app:training}. The draft models are trained with a block size of 16 (the anchor and $\gamma = 15$ draft positions), and during inference we test the checkpoints with both budget 16 and 32, the block size at verification. DSpark runs without its confidence-based scheduling, so every method verifies a block of fixed size.

\textbf{Benchmarks.} We evaluate on GSM8K~\citep{cobbe2021gsm8k}, MATH-500~\citep{hendrycks2021math}, AIME25~\citep{balunovic2025matharena}, HumanEval~\citep{chen2021evaluating}, MBPP~\citep{austin2021program}, LiveCodeBench (LCB)~\citep{jain2024livecodebench} and MT-Bench~\citep{zheng2023judging}, with the first 128 prompts of a fixed shuffle, all 30 AIME25 problems and the 80 two-turn MT-Bench questions (Appendix~\ref{app:training}).

\textbf{Evaluation protocol.} Every method generates at most 2,048 tokens, stopping at EOS or at the onset of a repetition loop (Appendix~\ref{app:training}). Tables~\ref{tab:1} and~\ref{tab:scale} measure the acceptance length offline, and Table~\ref{tab:2} measures serving at batch size 1 on NVIDIA GB200 GPUs with CUDA graphs. On a miss, \sysname drafts a fallback block from the verified context in one pass (Section~\ref{sec:41}). We report both greedy decoding and temperature 1 with rejection sampling (Appendix~\ref{app:refresh}). Speedup is measured in tokens per second (tok/s) relative to AR on one GPU of the same type. In Table~\ref{tab:2}, SSD and \sysname{} place their draft models on a second GPU (Section~\ref{sec:41}). The acceptance length $\tau$ is the mean number of tokens committed per verification round.

\subsection{Results}\label{sec:52}

Table~\ref{tab:1} reports the acceptance length. At both temperatures and both budgets, \sysname{} commits more tokens per round than every baseline on all seven benchmarks. At temperature 0 and budget 16, $J_1$-$R_1$ averages 7.50 against 6.89 for DSpark, the strongest baseline, and $J_3$-$R_3$ raises it to 8.94. Foresight drafting costs little acceptance length. At each of $J_0$ to $J_3$ without context refresh, the draft model keeps 95 to 98\% of the $\tau$ it reaches from the target's true states (Table~\ref{tab:F2}).

Although every draft model is trained on blocks of 16, \sysname{} extrapolates to blocks of 32. At budget 32, $J_1$-$R_1$ gains $\tau$ on every benchmark at both temperatures, whereas DFlash, DSpark and Domino lose it. On MATH-500 at temperature 0, for example, $\tau$ rises from 9.63 to 11.45 for $J_1$-$R_1$ and falls from 9.01 to 8.66 for DSpark. With causal attention inside the block, the 16 added positions leave the draft of the first 16 unchanged, and JetSpec, also causal, likewise gains $\tau$ at temperature 0. DFlash and Domino attend bidirectionally within the block, so the 16 added positions also change the draft of the first 16.

\subsection{Generalization to a Larger Target}\label{sec:scale}

\begin{table}[t]

\centering
\small
\setlength{\tabcolsep}{4pt}
\caption{Generalization to a larger target. Acceptance length $\tau$ for Qwen3.8-27B as in Table~\ref{tab:1}.}
\label{tab:scale}
\resizebox{\ifdim\width>\linewidth\linewidth\else\width\fi}{!}{\begin{tabular}{lc*{7}{S[table-format=2.2]}}
\toprule
Method & Budget & {GSM8K} & {MATH-500} & {AIME25} & {HumanEval} & {MBPP} & {LCB} & {MT-Bench} \\
\midrule
\multicolumn{9}{c}{Qwen3.8-27B, temperature = 0} \\
\midrule
\multirow{2}{*}{DSpark} & 16 & 8.02 & 8.94 & 8.30 & 8.19 & 7.62 & 5.19 & 4.00 \\
 & 32 & 7.57 & 8.49 & 7.64 & 7.13 & 6.96 & 4.84 & 3.83 \\
\midrule
\multirow{2}{*}{JetSpec++, $J_1$-$R_1$} & 16 & 8.71 & 9.53 & 8.58 & 9.18 & 8.41 & 5.80 & 4.30 \\
 & 32 & 9.68 & 11.10 & 9.82 & 10.72 & 9.42 & 5.94 & 4.40 \\
\midrule
\multirow{2}{*}{JetSpec++, $J_3$-$R_3$} & 16 & 10.05 & 10.83 & 10.15 & 10.72 & 9.80 & 6.93 & 4.91 \\
 & 32 & 12.35 & 14.05 & 12.49 & 13.47 & 11.76 & 7.43 & 5.15 \\
\midrule
\multicolumn{9}{c}{Qwen3.8-27B, temperature = 1} \\
\midrule
\multirow{2}{*}{DSpark} & 16 & 6.48 & 7.60 & 7.08 & 6.91 & 6.12 & 4.29 & 3.39 \\
 & 32 & 6.42 & 7.17 & 6.35 & 6.25 & 5.66 & 4.08 & 3.23 \\
\midrule
\multirow{2}{*}{JetSpec++, $J_1$-$R_1$} & 16 & 7.84 & 7.98 & 7.16 & 8.07 & 6.92 & 4.66 & 3.55 \\
 & 32 & 9.02 & 8.87 & 7.88 & 8.60 & 7.38 & 4.89 & 3.59 \\
\midrule
\multirow{2}{*}{JetSpec++, $J_3$-$R_3$} & 16 & 8.46 & 9.00 & 7.96 & 8.48 & 7.80 & 5.34 & 3.92 \\
 & 32 & 10.64 & 10.50 & 9.03 & 9.93 & 8.16 & 5.67 & 4.05 \\
\bottomrule
\end{tabular}}
\end{table}

Table~\ref{tab:scale} compares \sysname{} with DSpark on Qwen3.8-27B, a hybrid model in which 48 of the 64 layers are Gated DeltaNet~\citep{yang2025gated} layers and 16 are full-attention layers. The adapter and the decoder adapt to the hybrid layers (Appendices~\ref{app:adapter} and~\ref{app:details}). The adapter has 12 layers, as for Qwen3-8B (Table~\ref{tab:C3}), although the 27B target has 64 layers against 36. With this adapter, \sysname{} at $J_1$-$R_1$ leads DSpark on all seven benchmarks at temperature 0 and budget 16, with 8.71 and 9.53 on GSM8K and MATH-500 against 8.02 and 8.94. As on Qwen3-8B, the $\tau$ of \sysname{} rises at budget 32 on all seven benchmarks, whereas that of DSpark falls.

\subsection{System Performance}\label{sec:53}

Table~\ref{tab:2} measures how much drafting latency \sysname removes from each draft-verification round and how that translates into end-to-end speedup. We evaluate all methods under the same CUDA-graph serving setup (Appendix~\ref{app:serving}), closer to production inference. At $J_1$ with one context refresh and anchor replacement, \sysname is 1.54$\times$ faster than DSpark on GSM8K and 1.51$\times$ faster on MATH-500. It commits slightly more tokens per round (7.78 vs.\ 7.34 and 9.41 vs.\ 8.98), but most of the gain comes from hiding drafting latency. For an 8B target, graphed verification takes 6.1 to 6.3 ms, and the draft process runs one refinement pass and one context refresh within it. DSpark adds 3.76 and 3.80 ms of drafting per round, whereas \sysname exposes only 0.61 and 0.66 ms, with the realized outcome already pre-drafted in 92.9\% and 91.7\% of rounds. SSD, designed for larger targets, reaches only 1.52$\times$ and 1.55$\times$: its autoregressive draft model outlasts verification by 8.51 and 8.55 ms per round, and each round commits at most five tokens ($\tau$ of 4.06 and 4.20).
As target size grows, verification becomes more expensive relative to drafting, creating a larger latency window for additional refinement passes and context refreshes. On Qwen3.8-27B, graphed verification takes 27.4 to 27.8 ms, and the draft process runs three refinement passes and three context refreshes within it. \sysname{} then exposes 0.71 and 0.77 ms per round against 4.46 and 4.52 ms for DSpark, and runs 1.37$\times$ and 1.31$\times$ faster than DSpark on GSM8K and MATH-500. %

\begin{table}[h]

\centering
\small
\setlength{\tabcolsep}{3pt}
\caption{Throughput, acceptance length and round time at temperature 0 and budget 16 (5 for SSD, Appendix~\ref{app:training}), on NVIDIA GB200 GPUs with CUDA graphs (Section~\ref{sec:51}). JetSpec++ runs $J_1$ with one context refresh and anchor replacement for Qwen3-8B and $J_3$ with three of each for Qwen3.8-27B. Speedup is over AR. Hit is the share $p_{\text{hit}}$ of rounds whose outcome was pre-drafted, and $\tau_{\text{hit}}$ and $\tau_{\text{miss}}$ are the mean numbers of tokens committed on hit and miss rounds, all three over the rounds after the first of each request, which has no cache.  Round and verify are the mean times per round in ms, and exposed is their difference.}
\label{tab:2}
\resizebox{\ifdim\width>\linewidth\linewidth\else\width\fi}{!}{\begin{tabular}{lllccccccccc}
\toprule
Model & Benchmark & Method & GPUs & tok/s (speedup) & $\tau$ & hit & $\tau_{\text{hit}}$ & $\tau_{\text{miss}}$ & round & verify & exposed \\
\midrule
\multirow{10}{*}{Qwen3-8B} & \multirow{5}{*}{GSM8K} & AR & 1 & 182.3 (1.00$\times$) & 1.00 & N/A & N/A & N/A & 5.49 & 5.45 & 0.04 \\
 &  & DSpark & 1 & 744.3 (4.08$\times$) & 7.34 & N/A & N/A & N/A & 9.92 & 6.16 & 3.76 \\
 &  & DFlash & 1 & 620.1 (3.40$\times$) & 5.87 & N/A & N/A & N/A & 9.52 & 6.11 & 3.42 \\
 &  & SSD & 2 & 276.5 (1.52$\times$) & 4.06 & 95.7\% & 4.12 & 3.23 & 14.75 & 6.25 & 8.51 \\
 &  & JetSpec++ & 2 & 1148.8 (6.30$\times$) & 7.78 & 92.9\% & 8.20 & 3.45 & 6.84 & 6.23 & 0.61 \\
\cmidrule{2-12}
 & \multirow{5}{*}{MATH-500} & AR & 1 & 182.7 (1.00$\times$) & 1.00 & N/A & N/A & N/A & 5.47 & 5.44 & 0.04 \\
 &  & DSpark & 1 & 902.1 (4.94$\times$) & 8.98 & N/A & N/A & N/A & 9.95 & 6.16 & 3.80 \\
 &  & DFlash & 1 & 795.5 (4.35$\times$) & 7.65 & N/A & N/A & N/A & 9.63 & 6.16 & 3.47 \\
 &  & SSD & 2 & 282.5 (1.55$\times$) & 4.20 & 85.8\% & 4.32 & 3.47 & 14.88 & 6.33 & 8.55 \\
 &  & JetSpec++ & 2 & 1361.1 (7.45$\times$) & 9.41 & 91.7\% & 9.83 & 4.87 & 6.92 & 6.26 & 0.66 \\
\midrule
\multirow{6}{*}{Qwen3.8-27B} & \multirow{3}{*}{GSM8K} & AR & 1 & 60.3 (1.00$\times$) & 1.00 & N/A & N/A & N/A & 16.58 & 16.54 & 0.04 \\
 &  & DSpark & 1 & 249.6 (4.14$\times$) & 7.93 & N/A & N/A & N/A & 31.88 & 27.42 & 4.46 \\
 &  & JetSpec++ & 2 & 342.4 (5.68$\times$) & 9.67 & 89.0\% & 10.46 & 3.97 & 28.47 & 27.76 & 0.71 \\
\cmidrule{2-12}
 & \multirow{3}{*}{MATH-500} & AR & 1 & 59.3 (1.00$\times$) & 1.00 & N/A & N/A & N/A & 16.88 & 16.83 & 0.04 \\
 &  & DSpark & 1 & 279.8 (4.72$\times$) & 8.99 & N/A & N/A & N/A & 32.09 & 27.57 & 4.52 \\
 &  & JetSpec++ & 2 & 367.2 (6.20$\times$) & 10.45 & 90.0\% & 11.10 & 4.65 & 28.47 & 27.70 & 0.77 \\
\bottomrule
\end{tabular}}

\end{table}

\subsection{Ablation Studies}\label{sec:54}

\begin{table}[htb]

\centering
\small
\setlength{\tabcolsep}{3.5pt}
\caption{Self-refinement of the draft model. Acceptance length $\tau$ over refinement passes at temperature 0 for the two-epoch ablation draft model, trained with token or mask inputs, and on blocks with a clean prefix drawn by the prefix-length schedule of Appendix~\ref{app:training} or on fully noised blocks only. The target's true states and no context refresh, 164 prompts per benchmark, 256 tokens. Paired per prompt, every other row trails ours at $J_1$, $J_2$ and $J_3$ by at least 8 standard errors.}
\label{tab:3}
\resizebox{\ifdim\width>\linewidth\linewidth\else\width\fi}{!}{\begin{tabular}{lccccccccc}
\toprule
 &  & \multicolumn{4}{c}{GSM8K} & \multicolumn{4}{c}{MATH-500} \\
\cmidrule(lr){3-6}\cmidrule(lr){7-10}
Inputs & Training blocks & $J_0$ & $J_1$ & $J_2$ & $J_3$ & $J_0$ & $J_1$ & $J_2$ & $J_3$ \\
\midrule
\multirow{2}{*}{token (ours)} & clean prefix (ours) & 4.34 & 4.87 & 5.01 & 5.05 & 5.91 & 6.75 & 6.99 & 7.08 \\
 & fully noised & 4.36 & 4.37 & 4.37 & 4.37 & 6.02 & 6.01 & 6.02 & 6.01 \\
\midrule
\multirow{2}{*}{mask} & clean prefix & 4.37 & 4.57 & 4.63 & 4.67 & 5.93 & 6.29 & 6.47 & 6.56 \\
 & fully noised & 4.34 & 1.23 & 1.26 & 1.23 & 5.99 & 1.28 & 1.32 & 1.29 \\
\bottomrule
\end{tabular}}
\end{table}

Table~\ref{tab:3} isolates the draft model's side of \sysname against controls, and Table~\ref{tab:6} isolates context refresh and anchor replacement, each under the condition in its caption. Appendix~\ref{app:adapter} and Appendix~\ref{app:decoder} evaluate the adapter and the decoder.

\textbf{Refinement.} Table~\ref{tab:3} tests whether the draft model can refine its own draft. A draft model trained on token inputs with every block fully noised drafts as well at $J_0$ but does not refine, so refinement is learned from the blocks with a clean prefix, which the prefix-length schedule of Appendix~\ref{app:training} supplies. With this schedule but mask inputs, a draft model keeps only 42\% and 54\% of the gain from $J_0$ to $J_3$. At $J_0$ the other three draft models differ from ours by at most 0.11 tokens, so the inputs and the schedule decide how much a draft model refines. The final draft models show the same split at temperature 0 (Table~\ref{tab:E2}). The JetSpec draft model, which is trained on mask inputs and fully noised blocks, falls from 5.86 at $J_0$ to 2.39 at $J_1$ on GSM8K and from 7.51 to 2.57 on MATH-500. The JetSpec++ draft model instead rises from 5.70 at $J_0$ to 6.97 at $J_3$ on GSM8K and from 7.52 to 8.98 on MATH-500.

\textbf{Context Refresh and Anchor Replacement.} Table~\ref{tab:6} compares $J$ passes alone with the same passes and a context refresh before each, with or without anchor replacement. Refinement alone raises $\tau$ by 0.42 to 1.27 tokens from $J_0$ to $J_3$, and context refreshes with anchor replacement add 0.92 to 1.55 tokens at $J_1$-$R_1$ and 1.72 to 2.97 at $J_3$-$R_3$ over the same passes alone. Context refresh alone adds at most 0.84 tokens, at most 34\% of the gain with anchor replacement, so most of that gain comes from anchor replacement, not from the next pass reading the re-predicted states. Appendix~\ref{app:refreshexp} traces one context refresh and anchor replacement (Figure~\ref{fig:8_appendix-R1-refresh-correction}) and measures it without pipelining (Table~\ref{tab:E4}).

\begin{table}[t]

\centering
\small
\setlength{\tabcolsep}{3.5pt}
\caption{Gain of context refresh and anchor replacement. Acceptance length $\tau$ at temperature 0 and budget 16 for the draft model and in the setting of Table~\ref{tab:1}, with $J$ passes after $J_0$ and $R$ context refreshes per round. A context refresh alone re-predicts the target states of the draft tokens and keeps every draft token. With anchor replacement, it writes the decoder's token $\hat a$ of Eq.~\ref{eq:refresh} into the block.}
\label{tab:6}
\resizebox{\ifdim\width>\linewidth\linewidth\else\width\fi}{!}{\begin{tabular}{cclccccccc}
\toprule
$J$ & $R$ & Anchor repl. & GSM8K & MATH-500 & AIME25 & HumanEval & MBPP & LCB & MT-Bench \\
\midrule
0 & 0 & -- & 5.54 & 7.33 & 6.62 & 5.07 & 4.77 & 5.60 & 3.15 \\
1 & 0 & -- & 6.43 & 8.22 & 7.46 & 5.80 & 5.43 & 6.34 & 3.46 \\
2 & 0 & -- & 6.72 & 8.52 & 7.77 & 6.04 & 5.65 & 6.66 & 3.54 \\
3 & 0 & -- & 6.81 & 8.59 & 7.82 & 6.12 & 5.72 & 6.71 & 3.57 \\
\midrule
1 & 1 & no & 6.86 & 8.45 & 7.86 & 6.15 & 5.71 & 6.76 & 3.65 \\
1 & 1 & yes & 7.98 & 9.63 & 8.97 & 7.23 & 6.73 & 7.59 & 4.38 \\
\midrule
2 & 2 & no & 7.48 & 9.08 & 8.55 & 6.59 & 6.14 & 7.06 & 3.81 \\
2 & 2 & yes & 9.23 & 10.68 & 10.17 & 8.33 & 7.79 & 8.44 & 4.98 \\
\midrule
3 & 3 & no & 7.65 & 9.26 & 8.56 & 6.78 & 6.31 & 7.32 & 3.88 \\
3 & 3 & yes & 9.78 & 11.04 & 10.49 & 8.79 & 8.24 & 8.95 & 5.29 \\
\bottomrule
\end{tabular}}
\end{table}

\textbf{Training loss and attention.} Forward KL leads SFT and reverse KL in every cell, by up to 0.27 and 2.72 tokens (Table~\ref{tab:4}). Causal attention within the block matches bidirectional attention at $J_0$ and leads it at $J_3$, by 0.15 tokens on MATH-500 (Table~\ref{tab:5}). Both tables are in Appendix~\ref{app:drafttrain}.

\textbf{Adapter.} At $J_0$, the adapter's predicted states recover 95\% of the gap between copying the last verified state and the true states on both GSM8K and MATH-500 (Appendix~\ref{app:adapter}).

\subsection{Limitations}\label{sec:55}
Our serving integration can be further improved through systems optimization. In particular, fused kernels and optimized communication could reduce drafting overhead and enable additional refinement passes and context refreshes to further improve throughput.

\raggedbottom %
\section{Conclusion}

We presented \sysname, which drafts the next block while the current one is verified. Foresight Drafting predicts the target states and the verification outcome before verification returns. We train the causal parallel draft model on blocks with a clean prefix and a noisy suffix, so that self-refinement can spend the freed draft-side time on Jacobi passes and context refreshes. On GB200 GPUs, \sysname decodes up to 7.45$\times$ faster than AR decoding and 1.54$\times$ faster than DSpark.

\bibliography{jetspec++}
\bibliographystyle{iclr2027_conference}

\newpage
\appendix
\raggedbottom
\setlength{\lineskiplimit}{0pt}

\section{Adapter Design and Ablations}\label{app:adapter}

The predicted states of the adapter (Section~\ref{sec:42}) recover most of the acceptance length that the target's true states give (Sections~\ref{sec:52} and~\ref{sec:54}). In this appendix, we describe the adapter's architecture and training, measure this recovery, and ablate the number of adapter layers and the position weight in the loss.

\textbf{Architecture.} The adapter has one module $A_\ell$ of Eq.~\ref{eq:5} for each target layer $\ell$ that the draft model reads, and each module is a short stack of transformer layers. Module $A_\ell$ takes the block's states at the next lower layer $\ell^-$ that the draft model reads and predicts the block's states at $\ell$. The lowest module takes the block's token embeddings instead, so the modules form one chain from the tokens to the deepest layer that the draft model reads. In each module, the queries come from the block positions, and the keys and values come from the true states $h^{\ell}_{<n}$ of the verified context and the block's states $\hat h^{\ell^-}$ at earlier block positions. We initialize each module with copies of target layers between $\ell^-$ and $\ell$, ending with layer $\ell$. The adapter reuses the target's token embedding and keeps it frozen. We train all other weights of the adapter, and it adds no projection or head of its own. The adapter has 2.32B trainable parameters for Qwen3-8B and 4.47B for Qwen3.8-27B.

\textbf{Target layers.} The draft model reads five target layers. Layer indices start at 0. The modules for the three lowest layers each copy only their own layer, and the two deepest modules copy four and five layers, twelve layers in all.
\begin{itemize}[leftmargin=*,topsep=2pt,itemsep=2pt]
\item Qwen3-8B (36 layers): the draft model reads layers 1, 9, 17, 25 and 33, spaced uniformly from the second layer to the third-to-last as in DFlash~\citep{chen2026dflash}.
\item Qwen3.8-27B (64 layers): the draft model reads layers 5, 19, 33, 47 and 61. Each adapter layer for this target takes its MLP from the copied layer and its attention from the nearest full-attention layer at or below it.
\end{itemize}

\textbf{Training objective.} We train the adapter against the frozen target, which computes the true states online, on blocks of $\gamma+1 = 16$ positions. Let $\mathbf h_{n+j}$ be the concatenation of the target's true states $h^{\ell}_{n+j}$ at the layers that the draft model reads, and $\hat{\mathbf h}_{n+j}$ the adapter's prediction of it. For a block whose anchor is at position $n$, the loss is
\begin{equation}
\mathcal L_{\text{adapter}} \;=\; \sum_{j=0}^{\gamma} \frac{\omega_j^{\kappa}}{\sum_{j'=0}^{\gamma}\omega_{j'}^{\kappa}}\Big[\operatorname{SmoothL1}\big(\hat{\mathbf h}_{n+j},\mathbf h_{n+j}\big) + 1-\cos\big(\hat{\mathbf h}_{n+j},\mathbf h_{n+j}\big)\Big],\label{eq:B.2}
\end{equation}
where $\omega_j$ is the probability that verification reaches block position $j$, taken from served runs. The exponent $\kappa$ sets how the positions are weighted: $\kappa = 0$ weights every position equally, and $\kappa = 1$ weights each position by $\omega_j$. We use $\kappa = 1/2$.

\textbf{Training data and optimization.} We train on the 800k training sequences of the draft model (Appendix~\ref{app:training}), less 24k held out, with each sequence capped at 3,072 tokens. We use AdamW with a peak learning rate of $3 \times 10^{-4}$ (linear warmup over the first 4\% of steps, then cosine decay), gradient clipping at norm 1.0 and no weight decay. For Qwen3-8B, we train for six epochs at a global batch of 4 to 8 sequences, in 219 GPU-hours. For Qwen3.8-27B, we train for two epochs at a global batch of 8, with the AdamW $\beta_2$ set to 0.95, in 172 GPU-hours.

\textbf{Recovery of the true-state acceptance length.} Table~\ref{tab:F2} reports $\tau$ of the final draft model with stale, predicted and true states. At each of $J_0$ to $J_3$, $\tau$ from the predicted states is 95 to 98\% of $\tau$ from the true states. The predicted states close 95\% of the gap in $\tau$ between the stale and the true states at $J_0$ on GSM8K and MATH-500, and 89 to 95\% over all 28 cells.

\begin{table}[!tb]
\centering
\small
\setlength{\tabcolsep}{3.5pt}
\caption{Acceptance length $\tau$ of the final draft model for Qwen3-8B at budget 16, temperature 0 and no context refresh, with three sources of the target states at unverified positions. Stale states copy the last verified state into every new position, the final adapter predicts the states, and true states exist only after verification. Every round drafts only the block of the verified outcome, so no round misses. Prompts as in Table~\ref{tab:1} (first turn only for MT-Bench), with at most 2,048 generated tokens.}
\label{tab:F2}
\resizebox{\ifdim\width>\linewidth\linewidth\else\width\fi}{!}{\begin{tabular}{clccccccc}
\toprule
$J$ & States & GSM8K & MATH-500 & AIME25 & HumanEval & MBPP & LCB & MT-Bench \\
\midrule
\multirow{3}{*}{0} & stale & 2.89 & 3.62 & 3.47 & 2.70 & 2.58 & 3.17 & 1.93 \\
 & predicted & 5.56 & 7.31 & 6.89 & 4.98 & 4.68 & 5.93 & 2.98 \\
 & true & 5.70 & 7.52 & 7.14 & 5.09 & 4.81 & 6.11 & 3.09 \\
\midrule
\multirow{3}{*}{1} & stale & 3.36 & 4.17 & 4.00 & 3.13 & 2.97 & 3.64 & 2.11 \\
 & predicted & 6.41 & 8.26 & 7.79 & 5.67 & 5.33 & 6.65 & 3.28 \\
 & true & 6.59 & 8.55 & 8.16 & 5.85 & 5.49 & 6.89 & 3.40 \\
\midrule
\multirow{3}{*}{2} & stale & 3.55 & 4.35 & 4.17 & 3.27 & 3.10 & 3.79 & 2.17 \\
 & predicted & 6.67 & 8.52 & 8.07 & 5.90 & 5.59 & 6.84 & 3.34 \\
 & true & 6.88 & 8.86 & 8.47 & 6.12 & 5.71 & 7.12 & 3.48 \\
\midrule
\multirow{3}{*}{3} & stale & 3.63 & 4.41 & 4.25 & 3.33 & 3.14 & 3.85 & 2.19 \\
 & predicted & 6.76 & 8.63 & 8.19 & 5.99 & 5.66 & 6.90 & 3.37 \\
 & true & 6.97 & 8.98 & 8.58 & 6.22 & 5.79 & 7.17 & 3.51 \\
\bottomrule
\end{tabular}}
\end{table}

\begin{table}[!tb]
\centering
\small
\setlength{\tabcolsep}{3.5pt}
\caption{Effect of the number of adapter layers for Qwen3-8B. Parenthesized values indicate the layers per module for target layers 1, 9, 17, 25 and 33. We train each adapter for 0.46 epochs with $\kappa = 1$ in Eq.~\ref{eq:B.2}. (a) Error $\epsilon_\ell$ of Eq.~\ref{eq:C.1} (lower is better) and cosine similarity between the predicted and the true states at each layer $\ell$ that the draft model reads, on held-out mathematics. Top-1 is the fraction of block positions at which the decoder of Section~\ref{sec:42}, reading each adapter's predicted states, ranks the target's greedy token first. KL is $\mathrm{KL}(p \,\|\, \hat p)$ in nats, where $p$ is the target's next-token distribution at temperature 1 and $\hat p$ is the decoder's. Both use the 200 held-out mathematics sequences of Table~\ref{tab:D1}. (b) Acceptance length $\tau$ of the ablation draft model at temperature 0.}
\label{tab:C3}
\resizebox{\ifdim\width>\linewidth\linewidth\else\width\fi}{!}{\begin{tabular}{lcccccccccccc}
\multicolumn{13}{l}{(a) Predicted states} \\
\toprule
 & \multicolumn{5}{c}{Error $\epsilon_\ell$} & \multicolumn{5}{c}{Cosine} & \multicolumn{2}{c}{Decoder} \\
\cmidrule(lr){2-6}\cmidrule(lr){7-11}\cmidrule(lr){12-13}
Layers & 1 & 9 & 17 & 25 & 33 & 1 & 9 & 17 & 25 & 33 & Top-1 & KL \\
\midrule
15 (2/2/2/4/5) & 0.260 & 0.255 & 0.274 & 0.219 & 0.160 & 0.962 & 0.963 & 0.958 & 0.970 & 0.981 & .958 & .068 \\
12 (1/1/1/4/5) (ours) & 0.317 & 0.275 & 0.291 & 0.223 & 0.161 & 0.943 & 0.957 & 0.953 & 0.969 & 0.981 & .956 & .072 \\
10 (1/1/1/3/4) & 0.317 & 0.278 & 0.295 & 0.231 & 0.172 & 0.943 & 0.956 & 0.951 & 0.967 & 0.979 & .952 & .087 \\
8 (1/1/1/2/3) & 0.321 & 0.283 & 0.302 & 0.244 & 0.182 & 0.941 & 0.955 & 0.949 & 0.964 & 0.977 & .945 & .111 \\
\bottomrule
\end{tabular}}

\medskip
\resizebox{\ifdim\width>\linewidth\linewidth\else\width\fi}{!}{\begin{tabular}{lcccccccc}
\multicolumn{9}{l}{(b) Acceptance length $\tau$} \\
\toprule
 & \multicolumn{4}{c}{GSM8K} & \multicolumn{4}{c}{MATH-500} \\
\cmidrule(lr){2-5}\cmidrule(lr){6-9}
Layers & $J_0$ & $J_1$ & $J_2$ & $J_3$ & $J_0$ & $J_1$ & $J_2$ & $J_3$ \\
\midrule
15 (2/2/2/4/5) & 4.23 & 4.70 & 4.82 & 4.86 & 5.77 & 6.52 & 6.72 & 6.79 \\
12 (1/1/1/4/5) (ours) & 4.24 & 4.71 & 4.84 & 4.88 & 5.74 & 6.45 & 6.67 & 6.75 \\
10 (1/1/1/3/4) & 4.22 & 4.68 & 4.80 & 4.84 & 5.72 & 6.44 & 6.68 & 6.74 \\
8 (1/1/1/2/3) & 4.19 & 4.66 & 4.78 & 4.81 & 5.66 & 6.38 & 6.59 & 6.67 \\
\bottomrule
\end{tabular}}
\end{table}

\textbf{Ablation setup.} We ablate the number of adapter layers and the exponent $\kappa$ of Eq.~\ref{eq:B.2} for Qwen3-8B. To measure $\tau$, every adapter feeds the same frozen draft model, the ablation draft model of Appendix~\ref{app:training}. We train the adapters on the mathematics slice of the 800k set, apart from a held-out part. At each layer $\ell$ that the draft model reads, the error $\epsilon_\ell$ of an adapter is the relative error of its predicted states on 2,048 blocks drawn from the held-out sequences,
\begin{equation}
\epsilon_\ell \;=\; \frac{\sum_{n}\sum_{j} \omega_j^2 \,\lVert \hat h^{\ell}_{n+j} - h^{\ell}_{n+j} \rVert_2 \,/\, \lVert h^{\ell}_{n+j} \rVert_2}{\sum_{n}\sum_{j} \omega_j^2},
\label{eq:C.1}
\end{equation}
where $n$ runs over the anchors of these blocks and $j$ is the position within the block. We measure $\tau$ with the protocol of the ablation draft models (Appendix~\ref{app:training}) and no context refresh. The draft model reads each adapter's predicted states at the unverified positions.

\textbf{Number of adapter layers.} We vary the number of adapter layers in Table~\ref{tab:C3}. From fifteen to twelve layers, the modules for layers 1, 9 and 17 drop from two layers to one. This change raises their error (from 0.260 to 0.317 at layer 1) but changes the errors at layers 25 and 33 by at most 0.004 and $\tau$ at $J_3$ by at most 0.04 tokens. Removing layers from the modules for layers 25 and 33 then raises the error there step by step and lowers the decoder's agreement with the target. With eight layers instead of fifteen, $\tau$ at $J_3$ drops by 0.05 tokens on GSM8K and 0.12 on MATH-500. We therefore give the modules for layers 1, 9 and 17 one layer each and put the extra layers in the modules for layers 25 and 33. Each layer adds to the time of every adapter call in Eq.~\ref{eq:3}. We use twelve layers.

\textbf{Position weighting in the loss.} We train twelve-layer adapters on the split above and vary only the exponent $\kappa$ of Eq.~\ref{eq:B.2} (Table~\ref{tab:C2}). The $\kappa = 1/2$ adapter is the ablation adapter of Appendix~\ref{app:decoder}. The exponent has little effect on $\tau$, and at $J_3$ the four adapters lie within 0.03 tokens on both benchmarks. The value $\kappa = 1/2$ gives the lowest error at layers 25 and 33, and $\kappa = 1$ gives the lowest decoder agreement and the highest KL.

\begin{table}[!tb]
\centering
\small
\setlength{\tabcolsep}{3.5pt}
\caption{Effect of the exponent $\kappa$ of the position weight in the adapter loss (Eq.~\ref{eq:B.2}) for Qwen3-8B, with twelve-layer adapters trained for 1.83 epochs. $\epsilon_{25}$ and $\epsilon_{33}$ are the errors of Eq.~\ref{eq:C.1} at layers 25 and 33 on held-out mathematics (lower is better), and the $J_0$ to $J_3$ columns give the acceptance length $\tau$ of the ablation draft model at temperature 0. Top-1 and KL are as in Table~\ref{tab:C3}.}
\label{tab:C2}
\resizebox{\ifdim\width>\linewidth\linewidth\else\width\fi}{!}{\begin{tabular}{lcccccccccccc}
\toprule
 & & & \multicolumn{4}{c}{GSM8K} & \multicolumn{4}{c}{MATH-500} & \multicolumn{2}{c}{Decoder} \\
\cmidrule(lr){4-7}\cmidrule(lr){8-11}\cmidrule(lr){12-13}
$\kappa$ & $\epsilon_{25}$ & $\epsilon_{33}$ & $J_0$ & $J_1$ & $J_2$ & $J_3$ & $J_0$ & $J_1$ & $J_2$ & $J_3$ & Top-1 & KL \\
\midrule
$0$ & 0.212 & 0.149 & 4.28 & 4.75 & 4.88 & 4.93 & 5.84 & 6.59 & 6.82 & 6.91 & .967 & .043 \\
$1/4$ & 0.210 & 0.148 & 4.27 & 4.75 & 4.88 & 4.93 & 5.81 & 6.57 & 6.81 & 6.88 & .967 & .045 \\
$1/2$ (ours) & 0.208 & 0.147 & 4.28 & 4.76 & 4.89 & 4.95 & 5.86 & 6.59 & 6.82 & 6.89 & .967 & .043 \\
$1$ & 0.209 & 0.148 & 4.27 & 4.75 & 4.87 & 4.92 & 5.82 & 6.55 & 6.80 & 6.88 & .964 & .050 \\
\bottomrule
\end{tabular}}
\end{table}

\FloatBarrier

\section{Decoder and Outcome Scores}\label{app:decoder}

We describe the served configuration of the decoder of Section~\ref{sec:42}, which reuses the target's upper layers without training. Figure~\ref{fig:8_appendix-A2-outcome-matrix} shows how one decoder pass supplies every factor of the outcome score in Eq.~\ref{eq:6}. We then calibrate this score, compare the decoder with a fine-tuned copy, and compare the calibrated score with other selection rules.

\begin{figure}[!tb]
\centering
\includegraphics[width=\textwidth]{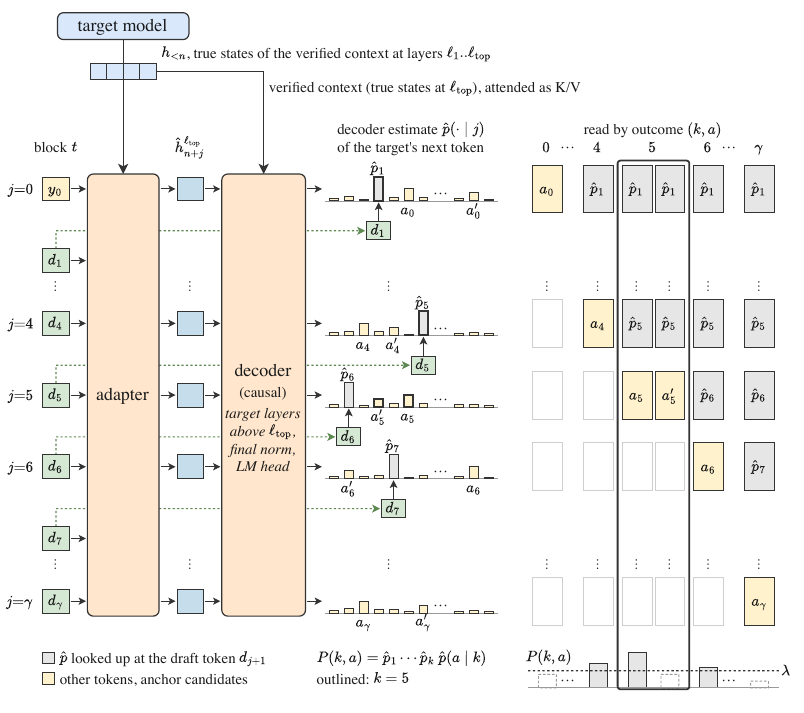}
\caption{\textbf{Scoring every outcome with one decoder pass.} The adapter predicts the block's states from the verified context, and the decoder reads them with the verified context as keys and values. Layers $\ell_1, \dots, \ell_{\text{top}}$ are the target layers that the draft model reads, and $\ell_{\text{top}}$ is the deepest of them. Row $j$ is the decoder's $\hat p(\cdot \mid j)$. For the outcomes with $k = 5$ (outlined), rows 0 to 4 supply the acceptance probabilities $\hat p_1, \dots, \hat p_5$ of the draft tokens, row 5 supplies the bonus distribution with $d_6$ excluded, and later rows are not read. The bars give each outcome's score against the threshold $\lambda$\ of Section~\ref{sec:42}: solid bars reach $\lambda$ and are cached, and dashed bars are dropped. Heights are schematic. Our experiments instead cache the $B$ outcomes with the highest calibrated score\ in every round (Appendix~\ref{app:details}).}
\label{fig:8_appendix-A2-outcome-matrix}
\end{figure}

\subsection{Scoring Rule and Cache Size}\label{app:details}

\textbf{Reused target layers.} The decoder reuses the target layers above the deepest layer that the adapter predicts, followed by the target's final norm and LM head, all frozen.\ Its estimate $\hat p(\cdot \mid j)$ is the softmax of the LM head's logits at temperature 1, at both sampling temperatures. For Qwen3-8B, the decoder reuses the two target layers above layer 33 (layers 34 and 35). The target sends its layer-33 states to the draft process after each verification. For Qwen3.8-27B, the decoder reuses the two target layers above layer 61 (layers 62 and 63). After each verification, the target sends the committed recurrent and convolution states of layer 62 (a Gated DeltaNet layer) and the keys and values of layer 63.

\textbf{Calibrated score.} \sysname{} scores each outcome $(k,a)$ by the \emph{calibrated score} $\tilde P(k)\,P_\beta(a \mid k)$. To compute $\tilde P(k)$, \sysname{} replaces each factor $\hat p_i$ of Eq.~\ref{eq:6} with a logistic function $\tilde p_i$ of a feature vector $\boldsymbol\phi_i$,
\begin{equation}
\tilde p_i \;=\; \sigma\big(\mathbf w^{\top} \boldsymbol\phi_i\big), \qquad G(k) = \prod_{i \le k} \tilde p_i, \qquad \tilde P(k) = G(k) - G(k+1), \quad G(\gamma+1) = 0.\label{eq:B.1}
\end{equation}
Here $\mathbf w$ holds one coefficient for each feature. The product $G(k)$ estimates the probability that verification accepts the first $k$ draft tokens, so $\tilde P(k)$ estimates the probability that verification accepts exactly $k$. The vector $\boldsymbol\phi_i$ holds six features of position $i$:
\begin{itemize}[leftmargin=*,topsep=2pt,itemsep=2pt]
\item the logit of the draft model's top-1 probability at position $i$ in its final pass,
\item the logit of the decoder's probability $\hat p_i$ of $d_i$,
\item an indicator of whether $d_i$ is the decoder's top token $\arg\max_v \hat p(v \mid i-1)$,
\item the logit of the decoder's largest renormalized probability $\max_{a \ne d_i} \hat p(a \mid i-1)/(1-\hat p_i)$ of a token other than $d_i$,
\item the scaled position $i/14$,
\item a constant bias feature.
\end{itemize}
The factor $P_\beta(a \mid k) \propto \hat p(a \mid k)^{1/\beta}$ is the decoder's bonus distribution at $k$, tempered by the anchor temperature $\beta$. \sysname{} renormalizes this distribution over the decoder's top 32 candidates, with $d_{k+1}$ removed when $k<\gamma$. We fit $\mathbf w$ and $\beta$ on the calibration rounds of Appendix~\ref{sec:D4}.

\textbf{Cache size.} For both targets, the draft process caches the $B$ highest-scoring outcomes per round, or all distinct outcomes when fewer exist. The cache size $B$ is 16 in Tables~\ref{tab:1} and~\ref{tab:scale}, and 4 in Table~\ref{tab:2}.  In the offline replays of Tables~\ref{tab:D2} and~\ref{tab:D3}, the served rule caches outcomes in the same way at each $B$.

\subsection{Fitting the Calibrated Score}\label{sec:D4}

We collect the calibration rounds by serving a pooled set of 56 held-out prompts for each target. For Qwen3-8B, each of GSM8K, MATH-500, HumanEval, MBPP and LCB contributes 8 prompts, drawn after its evaluation prompts. UltraChat~\citep{ding2023enhancing} and the GSM8K training split contribute 8 prompts each. We fit once per target and verification budget, and the same fit serves every benchmark and both temperatures.

The served rule has three fitted parts: the six coefficients $\mathbf w$ of Eq.~\ref{eq:B.1}, the anchor temperature $\beta$, and the split-conformal thresholds on the calibrated score. We fit $\mathbf w$ by maximum likelihood of the realized $k$ on the calibration rounds, and $\beta$ by maximum likelihood of the realized anchor. We set the thresholds on the same rounds by split-conformal prediction.

\subsection{Training-Free Versus Fine-Tuned Decoders}\label{sec:D1}

To test whether fine-tuning improves the reused layers, we compare the decoder of Appendix~\ref{app:details}, which we call the \emph{training-free decoder}, with a fine-tuned copy. We fine-tune the \emph{CE-trained decoder} for 6,000 updates on the adapter's predictions, with cross-entropy to the target's greedy token. Table~\ref{tab:D1} reports top-1 agreement, the fraction of block positions where the decoder's most probable token equals the target's greedy token.

\begin{table}[!tb]
\centering
\small
\caption{Offline agreement of each decoder with the greedy token of the Qwen3-8B target on 200 held-out mathematics sequences (379,305 positions), with the ablation adapter. KL is $\mathrm{KL}(p \,\|\, \hat p)$ in nats, where $p$ is the target's next-token distribution\ at temperature 1 and $\hat p$ is the decoder's.}
\label{tab:D1}
\resizebox{\ifdim\width>\linewidth\linewidth\else\width\fi}{!}{\begin{tabular}{lcc}
\toprule
Decoder & Top-1 agreement & KL to target \\
\midrule
training-free & .967 & .043 \\
CE-trained & .968 & .247 \\
\bottomrule
\end{tabular}}
\end{table}

Fine-tuning adds little to the reused layers (Table~\ref{tab:D1}). Cross-entropy fine-tuning raises top-1 agreement by .001, but it also raises the KL divergence to the target's next-token distribution from .043 to .247.

Table~\ref{tab:D2} tests whether this agreement carries over to the cache. We record the rounds of \sysname{} once and replay them with each decoder.

\begin{table}[!tb]
\centering
\small
\setlength{\tabcolsep}{3.5pt}
\caption{Offline hit rate of each decoder when the cache holds at most $B$ outcomes per round, ranked by the calibrated score of that decoder. The rounds come from \sysname{} at $J_1$-$R_1$ for Qwen3-8B with the final adapter in the setting of Table~\ref{tab:1}, at temperature 0 and budget 16. The hit rate is the share of rounds after the first of each request whose verified outcome is in the cache.}
\label{tab:D2}
\resizebox{\ifdim\width>\linewidth\linewidth\else\width\fi}{!}{\begin{tabular}{lcccccccc}
\toprule
 & \multicolumn{4}{c}{GSM8K} & \multicolumn{4}{c}{MATH-500} \\
\cmidrule(lr){2-5}\cmidrule(lr){6-9}
Decoder & $B = 1$ & $B = 4$ & $B = 8$ & $B = 16$ & $B = 1$ & $B = 4$ & $B = 8$ & $B = 16$ \\
\midrule
training-free & .624 & .927 & .977 & .993 & .634 & .903 & .961 & .989 \\
CE-trained & .617 & .934 & .979 & .994 & .654 & .935 & .980 & .995 \\
\bottomrule
\end{tabular}}
\end{table}

Fine-tuning raises the hit rate by at most .032 (the CE-trained decoder on MATH-500 at $B = 4$). We therefore serve the training-free decoder.

\subsection{Comparison of Selection Rules}\label{sec:D2}

The served rule scores each outcome $(k, a)$ by the calibrated score $\tilde P(k)\,P_\beta(a \mid k)$ of Appendix~\ref{app:details}. We compare it in Table~\ref{tab:D3} with rules that each remove one source of information. The uncalibrated rule ranks outcomes by the decoder's $P(k, a)$ of Eq.~\ref{eq:6}. The draft-model rule scores $k$ by $P(k)$ of Eq.~\ref{eq:6} with the draft model's probabilities $q_\theta(d_i)$ of its own draft tokens in place of the decoder's $\hat p_i$. It keeps the anchor factor $P_\beta(a \mid k)$ of the served rule, so it differs from the served rule only in the score of $k$. All of these rules rank anchors by the decoder.

We also compare the served rule with the geometric fan-out\ of SSD's Theorem~12, the series of Proposition~\ref{prop:2}(a) over all lengths $k = 0, \dots, \gamma$ (Appendix~\ref{app:prop2}). We fit the acceptance rate $\alpha$ and the exponent $\rho$ of Assumption~\ref{assu:2} on the calibration rounds. We then round the anchor counts $F_k$ to integers that sum to $B$\ by largest remainders. We fill the fan-out with the draft model's anchors, as SSD does. At $k = \gamma$, the draft model has no distribution for the bonus token, so the fan-out takes the decoder's anchors.\ A second row fills the same fan-out with the decoder's anchors at every length.

\begin{table}[!tb]
\centering
\small
\setlength{\tabcolsep}{3.5pt}
\caption{Offline hit rate of each selection rule when it caches at most $B$ outcomes per round, counted as in Table~\ref{tab:D2} and on the same rounds.}
\label{tab:D3}
\resizebox{\ifdim\width>\linewidth\linewidth\else\width\fi}{!}{\begin{tabular}{lcccccccc}
\toprule
 & \multicolumn{4}{c}{GSM8K} & \multicolumn{4}{c}{MATH-500} \\
\cmidrule(lr){2-5}\cmidrule(lr){6-9}
Rule & $B = 1$ & $B = 4$ & $B = 8$ & $B = 16$ & $B = 1$ & $B = 4$ & $B = 8$ & $B = 16$ \\
\midrule
served rule & .624 & .929 & .978 & .993 & .644 & .909 & .963 & .990 \\
uncalibrated rule & .613 & .921 & .975 & .993 & .585 & .870 & .944 & .981 \\
draft-model rule & .387 & .743 & .885 & .962 & .444 & .738 & .862 & .941 \\
geometric fan-out & .050 & .208 & .512 & .670 & .044 & .163 & .588 & .741 \\
geometric fan-out, decoder anchors & .067 & .289 & .655 & .890 & .059 & .222 & .682 & .898 \\
\bottomrule
\end{tabular}}
\end{table}

The served rule leads or ties every other rule at every cache size, and at $B=4$ it leads the uncalibrated rule by .008 on GSM8K and .039 on MATH-500. The draft-model rule ranks $k$ worse and trails the served rule by .186 on GSM8K and .171 on MATH-500 at $B=4$. The geometric fan-out, a fixed shape filled with the draft model's anchors, trails every rule.\ With the decoder's anchors, it still trails the served rule by .640 on GSM8K and .687 on MATH-500 at $B = 4$.

\FloatBarrier

\section{Training the Draft Model to Refine}\label{app:refine}\label{app:drafttrain}

Our draft model (Section~\ref{sec:43}) takes token inputs and trains on blocks with a clean prefix so that it can refine its own draft. We train it with a forward KL loss, and it attends causally within the block. We ablate the inputs and the training blocks in Section~\ref{sec:54} (Table~\ref{tab:3}). In this appendix, we explain that result and extend it to the final draft models (Table~\ref{tab:E2}), then ablate the share of fully noised blocks, the loss and the attention. Each draft model in these ablations is a two-epoch ablation draft model for Qwen3-8B (Appendix~\ref{app:training}).

\textbf{Inputs and training blocks.} A draft model with mask inputs drafts $J_0$ from mask embeddings and takes its own tokens as inputs from $J_1$ on. When such a draft model also trains only on fully noised blocks, its draft positions hold only mask embeddings during training, and it collapses at $J_1$ (Table~\ref{tab:3}). The $\tau$ of this draft model falls from 4.34 at $J_0$ to 1.23 at $J_1$ on GSM8K and from 5.99 to 1.28 on MATH-500. The JetSpec draft model is trained on mask inputs and fully noised blocks (Appendix~\ref{app:training}), and the \sysname{} draft model on token inputs and blocks with a clean prefix. Table~\ref{tab:E2} compares these two final draft models.

\textbf{Share of fully noised blocks.} We vary the share of fully noised blocks from 0 to 1, with token inputs (Table~\ref{tab:C-share}). The share is the probability that the clean prefix length $m$ of Eq.~\ref{eq:10} is 0. Our draft model uses 0.25 (Appendix~\ref{app:training}). Shares of 0 and 1 give a lower $\tau$ at $J_3$ than the four shares between them. From 0.125 to 0.5, $\tau$ at $J_3$ stays within 0.11 tokens on GSM8K and 0.12 on MATH-500. We decompose $\tau$ at $J_3$ into $\tau$ at $J_0$ and the gain from $J_0$ to $J_3$:
\begin{itemize}[leftmargin=*,topsep=2pt,itemsep=2pt]
\item From 0.125 to 1, $\tau$ at $J_0$ stays within 0.12 tokens on GSM8K and 0.14 on MATH-500. The draft model drafts $J_0$ from a fully noised block, and with a share of 0 it never trains on such a block. Against 0.25, a share of 0 lowers $\tau$ at $J_0$ by 0.43 tokens on GSM8K and 0.48 on MATH-500.
\item The gain from $J_0$ to $J_3$ is largest at a share of 0 (0.89 tokens on GSM8K and 1.38 on MATH-500) and falls to 0.60 and 1.02 at a share of 0.5. A share of 1 removes the clean prefixes, and the draft model does not refine.
\end{itemize}

\begin{table}[!tb]
\centering
\small
\setlength{\tabcolsep}{3.5pt}
\caption{Acceptance length $\tau$ over refinement passes for the final draft models of Qwen3-8B. All runs use the target's true states in the setting of Table~\ref{tab:F2}. Refinement passes lower the $\tau$ of the JetSpec draft model and raise that of the \sysname{} draft model.}
\label{tab:E2}
\resizebox{\ifdim\width>\linewidth\linewidth\else\width\fi}{!}{\begin{tabular}{lcccccccc}
\toprule
 & \multicolumn{4}{c}{GSM8K} & \multicolumn{4}{c}{MATH-500} \\
\cmidrule(lr){2-5}\cmidrule(lr){6-9}
Draft model & $J_0$ & $J_1$ & $J_2$ & $J_3$ & $J_0$ & $J_1$ & $J_2$ & $J_3$ \\
\midrule
JetSpec & 5.86 & 2.39 & 2.65 & 2.45 & 7.51 & 2.57 & 2.84 & 2.61 \\
JetSpec++ & 5.70 & 6.59 & 6.88 & 6.97 & 7.52 & 8.55 & 8.86 & 8.98 \\
\bottomrule
\end{tabular}}
\end{table}

\begin{table}[!tb]
\centering
\small
\setlength{\tabcolsep}{3.5pt}
\caption{Effect of the share of fully noised training blocks. Acceptance length $\tau$ for Qwen3-8B in the setting of Table~\ref{tab:3}. Each draft model takes token inputs and differs from ours only in the probability of prefix length $m = 0$, with the rest split equally over lengths 1 to 15.}
\label{tab:C-share}
\resizebox{\ifdim\width>\linewidth\linewidth\else\width\fi}{!}{\begin{tabular}{lcccccccc}
\toprule
 & \multicolumn{4}{c}{GSM8K} & \multicolumn{4}{c}{MATH-500} \\
\cmidrule(lr){2-5}\cmidrule(lr){6-9}
$\Pr[m = 0]$ & $J_0$ & $J_1$ & $J_2$ & $J_3$ & $J_0$ & $J_1$ & $J_2$ & $J_3$ \\
\midrule
0 & 3.91 & 4.56 & 4.75 & 4.80 & 5.43 & 6.39 & 6.72 & 6.81 \\
0.125 & 4.29 & 4.84 & 5.00 & 5.04 & 5.88 & 6.69 & 6.97 & 7.09 \\
0.25 (ours) & 4.34 & 4.87 & 5.01 & 5.05 & 5.91 & 6.75 & 6.99 & 7.08 \\
0.375 & 4.32 & 4.78 & 4.90 & 4.94 & 5.91 & 6.69 & 6.91 & 6.97 \\
0.5 & 4.41 & 4.84 & 4.96 & 5.01 & 6.01 & 6.74 & 6.95 & 7.03 \\
1 & 4.36 & 4.37 & 4.37 & 4.37 & 6.02 & 6.01 & 6.02 & 6.01 \\
\bottomrule
\end{tabular}}
\end{table}

\textbf{Training loss.} We compare forward KL, SFT and reverse KL as the divergence of Eq.~\ref{eq:10} (Appendix~\ref{app:training}) and change nothing else (Table~\ref{tab:4}). Forward KL leads SFT, and SFT leads reverse KL. Forward KL trains the draft model to match the target's full next-token distribution. The draft model thus also learns the target's ranking of the tokens beyond the greedy one. SFT supervises only the target's greedy token (the next token of the training sequence). The gap between SFT and forward KL is larger at $J_3$ than at $J_0$. Reverse KL trails both, and most of its deficit against forward KL at $J_3$ is already present at $J_0$ (1.40 of 1.97 tokens on GSM8K and 1.88 of 2.72 on MATH-500).

\begin{table}[!tb]
\centering
\small
\setlength{\tabcolsep}{3.5pt}
\caption{Effect of the training loss. Acceptance length $\tau$ for Qwen3-8B in the setting of Table~\ref{tab:3}.}
\label{tab:4}
\resizebox{\ifdim\width>\linewidth\linewidth\else\width\fi}{!}{\begin{tabular}{lcccccccc}
\toprule
 & \multicolumn{4}{c}{GSM8K} & \multicolumn{4}{c}{MATH-500} \\
\cmidrule(lr){2-5}\cmidrule(lr){6-9}
Loss & $J_0$ & $J_1$ & $J_2$ & $J_3$ & $J_0$ & $J_1$ & $J_2$ & $J_3$ \\
\midrule
forward KL (ours) & 4.34 & 4.87 & 5.01 & 5.05 & 5.91 & 6.75 & 6.99 & 7.08 \\
SFT & 4.28 & 4.74 & 4.85 & 4.88 & 5.81 & 6.58 & 6.76 & 6.81 \\
reverse KL & 2.94 & 3.06 & 3.08 & 3.08 & 4.03 & 4.29 & 4.34 & 4.36 \\
\bottomrule
\end{tabular}}
\end{table}

\textbf{Attention within the block.} We compare causal and bidirectional attention within the draft block and change nothing else (Table~\ref{tab:5}). At $J_0$ the bidirectional draft model matches the causal one. In the refinement passes the causal draft model leads.

\begin{table}[!tb]
\centering
\small
\setlength{\tabcolsep}{3.5pt}
\caption{Effect of the attention within the draft block. Acceptance length $\tau$ for Qwen3-8B in the setting of Table~\ref{tab:3}.}
\label{tab:5}
\resizebox{\ifdim\width>\linewidth\linewidth\else\width\fi}{!}{\begin{tabular}{lcccccccc}
\toprule
 & \multicolumn{4}{c}{GSM8K} & \multicolumn{4}{c}{MATH-500} \\
\cmidrule(lr){2-5}\cmidrule(lr){6-9}
Attention & $J_0$ & $J_1$ & $J_2$ & $J_3$ & $J_0$ & $J_1$ & $J_2$ & $J_3$ \\
\midrule
causal (ours) & 4.34 & 4.87 & 5.01 & 5.05 & 5.91 & 6.75 & 6.99 & 7.08 \\
bidirectional & 4.36 & 4.81 & 4.95 & 5.00 & 5.88 & 6.65 & 6.82 & 6.93 \\
\bottomrule
\end{tabular}}
\end{table}

\FloatBarrier

\section{Self-Refinement Example and Drafting Without Pipelining}\label{app:refreshexp}

A context refresh (Section~\ref{sec:43}) uses both the adapter and the decoder. Figure~\ref{fig:8_appendix-R1-refresh-correction} traces one context refresh and its anchor replacement, and we measure self-refinement without pipelining. Appendix~\ref{app:refresh} shows that the output stays lossless when anchor replacement rewrites the draft.

\textbf{Anchor replacement rule.} After a context refresh, the decoder reads the refreshed states of the draft $y^{(r)}$, and anchor replacement uses the first disagreement between the decoder's argmax and the draft:
\begin{equation}
\begin{gathered}
\hat k = \max\big\{\, 0 \le i \le \gamma - 1 : y^{(r)}_j = \arg\max_v \hat p(v \mid j-1) \;\, \forall\, 1 \le j \le i \,\big\},\\
\hat a = \arg\max_v \hat p(v \mid \hat k)
\end{gathered}
\label{eq:refresh}
\end{equation}
Here $\hat p(\cdot \mid j)$ are the decoder's next-token estimates of Section~\ref{sec:42}, and $(\hat k, \hat a)$ is the outcome that greedy verification of the draft would return if the decoder were the target. Anchor replacement fixes the tokens up to position $\hat k$ for the rest of the round. If $\hat a$ differs from the draft token at position $\hat k + 1$, it writes $\hat a$ there and fixes that position too. A position fixed earlier keeps its token. The next pass reads the refreshed states only up to position $\hat k$, because a token written at position $\hat k + 1$ has no refreshed state, and it redrafts only the positions after the last fixed one.

\begin{figure}[!tb]
\centering
\includegraphics[width=\textwidth]{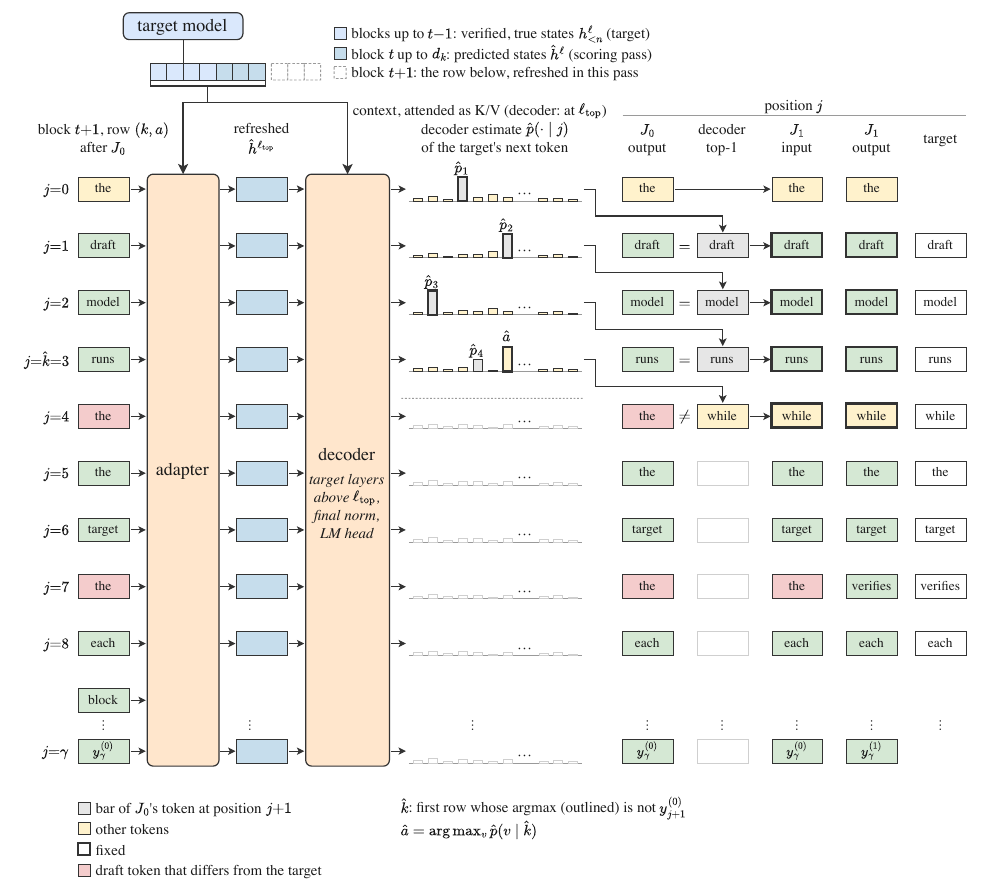}
\caption{\textbf{Correcting the first disagreement with one anchor replacement.} The figure shows the context refresh after $J_0$ of one batch row, the block $t+1$ drafted for the outcome $(k, a)$, in the layout that Figure~\ref{fig:8_appendix-A2-outcome-matrix} uses for the predicted states of block $t$. The adapter re-predicts the states of block $t+1$, and the decoder reads them as in Figure~\ref{fig:8_appendix-A2-outcome-matrix}. The decoder's top-1 token agrees with the $J_0$ draft at positions 1 to 3 and disagrees at position 4, so $\hat k = 3$ and anchor replacement writes $\hat a$ at position 4. The $J_1$ pass reads $\hat a$ at position 4 and the $J_0$ tokens at the other positions. Anchor replacement corrects position 4 and fixes positions 1 to 4, and the $J_1$ pass corrects position 7 by itself.}
\label{fig:8_appendix-R1-refresh-correction}
\end{figure}

\textbf{Drafting without pipelining.} \sysname{} drafts while the target verifies, and Table~\ref{tab:E4} compares this pipelined drafting with drafting on the critical path, which lengthens the round. When every method drafts on the critical path, $J_1$-$R_1$ leads DSpark by 0.79 tokens on GSM8K and 0.81 on MATH-500, so \sysname{} also has the higher $\tau$ without pipelining. Across $J_1$-$R_1$ to $J_3$-$R_3$, the pipelined drafting stays within 0.12 tokens of critical-path drafting on GSM8K and 0.19 on MATH-500. \sysname{} can therefore also run without pipelining when its gain in $\tau$ over DSpark outweighs any extra draft time, which favors targets whose verification takes most of the round.

\begin{table}[!tb]
\centering
\small
\setlength{\tabcolsep}{4pt}
\caption{Acceptance length $\tau$ with drafting pipelined or on the critical path, for Qwen3-8B in the setting of Table~\ref{tab:1} at budget 16 and temperature 0. A pipelined draft starts before verification ends, as \sysname{} serves. A critical-path draft starts after verification and reads the target's true states, as in DFlash and DSpark.}
\label{tab:E4}
\resizebox{\ifdim\width>\linewidth\linewidth\else\width\fi}{!}{\begin{tabular}{llcc}
\toprule
Method & Drafting & GSM8K & MATH-500 \\
\midrule
DFlash & critical path & 5.91 & 7.59 \\
DSpark & critical path & 7.31 & 9.01 \\
\midrule
\multirow{2}{*}{\sysname, $J_1$-$R_1$} & pipelined & 7.98 & 9.63 \\
 & critical path & 8.10 & 9.82 \\
\multirow{2}{*}{\sysname, $J_2$-$R_2$} & pipelined & 9.23 & 10.68 \\
 & critical path & 9.25 & 10.61 \\
\multirow{2}{*}{\sysname, $J_3$-$R_3$} & pipelined & 9.78 & 11.04 \\
 & critical path & 9.86 & 11.08 \\
\bottomrule
\end{tabular}}
\end{table}

\FloatBarrier

\section{Training and Evaluation Details}\label{app:training}

\textbf{Training data and optimization.} We take the prompts of the 800k training sequences from Nemotron-Post-Training-Dataset-v2~\citep{nvidia2025nemotron} and CodeAlpaca-20k~\citep{chaudhary2023codealpaca}, and each target model generates the responses greedily in non-thinking mode. We train the draft model of Section~\ref{sec:43} on these sequences for six epochs with AdamW and a peak learning rate of $3 \times 10^{-4}$ (linear warmup over the first 4\% of steps, then cosine decay). We use gradient clipping at norm 1.0, no weight decay, a global batch of 16 sequences and seed 42. Training takes 347 GPU-hours on GB200 GPUs for Qwen3-8B and 710 for Qwen3.8-27B. The ablation draft models are for Qwen3-8B. They use the same learning rate, learning-rate schedule, global batch and seed, but train for two epochs on the 239k mathematics sequences of the 800k set.

\textbf{Training blocks.} Training sequences have at most 3,072 tokens for Qwen3-8B and 4,096 for Qwen3.8-27B, and we sample up to 512 anchor positions per sequence. An anchor must be a supervised response token that leaves room for a full block of 16 positions (the anchor and $\gamma = 15$ draft positions). Under the prefix-length schedule, the clean prefix length $m$ of Eq.~\ref{eq:10} is 0 (a fully noised block) with probability 0.25 and each of $1, \dots, 15$ with probability 0.05. Each noise token in $\tilde y$ copies the token at an independent, uniformly random position of the sequence up to and including the anchor.

\textbf{Draft model architecture.} The draft model has five transformer layers and ends with a final normalization. It concatenates the target's hidden states $h_x$ at the five target layers of Appendix~\ref{app:adapter}, passes the concatenation through a linear layer and normalizes the result. Every draft layer projects this result into keys and values of the verified context, as in DFlash~\citep{chen2026dflash}, and attends causally within the block (Section~\ref{sec:43}). We reuse the target's embedding and LM head and keep them frozen. The draft model has 1.05B trainable parameters for Qwen3-8B and 1.73B for Qwen3.8-27B.

\textbf{Training objective.} We use the forward KL divergence at temperature 1 as the divergence of Eq.~\ref{eq:10} and average it over the supervised draft positions of a batch, with no decay along the block. In the loss ablations of Table~\ref{tab:4} (Appendix~\ref{app:drafttrain}), SFT is the cross-entropy to the next token of the training sequence, and reverse KL is $\mathrm{KL}(q_\theta \,\|\, p)$ at temperature 1.

\textbf{Baselines.} The DFlash, DSpark, JetSpec and Domino draft models are trained for six epochs on the same 800k sequences as our draft model. DFlash, DSpark, Domino and JetSpec also share our maximum length, anchor count, warmup and seed. DFlash and JetSpec have five transformer layers and read the same five target layers as our draft model. However, both baselines fill every draft position of a training block with a mask token.
\begin{itemize}[leftmargin=*,topsep=2pt,itemsep=2pt]
\item We train DFlash without the positional loss decay and at our learning rate of $3 \times 10^{-4}$. It attends bidirectionally within the block and is trained with the SFT loss at our global batch.
\item We reproduce DSpark with 15 draft tokens after the anchor, its Markov head and its confidence head. The loss also follows DSpark: a weighted sum of the cross-entropy, the L1 distance between the draft and target distributions, and the confidence loss, with weights 0.1, 0.9 and 1. DSpark weights all three terms at draft position $i$ by $\exp(-(i-1)/7)$, the positional loss decay of DFlash. We train DSpark at a learning rate of $6 \times 10^{-4}$. Its global batch is 8 for Qwen3-8B and 16 for Qwen3.8-27B.
\item Domino attends bidirectionally within the block and adds its Domino head. It is trained with its cross-entropy loss and a positional loss decay constant of 7. We train it at our learning rate of $3 \times 10^{-4}$ and at a global batch of 48.
\item JetSpec attends causally within the block and is trained with our loss and learning rate at a global batch of 8.
\item SSD uses the off-the-shelf Qwen3-0.6B model as its draft model. It drafts 4 tokens per round (a block of 5 with the anchor) and pre-drafts three outcomes $(k, a)$ for each $k$ on GSM8K and one on MATH-500, a flat fan-out as in SSD's released code.
\end{itemize}

\textbf{Benchmarks.} For the fixed shuffle of Section~\ref{sec:51}, we shuffle the test split of each benchmark with seed 0. We use the sanitized MBPP and release v6 of LCB. The runs of the ablation draft models use the first 164 prompts of GSM8K and MATH-500 from the same shuffle, at temperature 0 and with at most 256 generated tokens. On MT-Bench, we evaluate both turns of the 80 questions. Every method starts the second turn from the same first-turn answer, which the target generates with AR.

\textbf{Repetition loops.} Some outputs, mostly greedy ones at temperature 0, fall into a verbatim repetition loop that runs to the 2,048-token cap. Draft models can predict repeated text almost perfectly, which would inflate $\tau$ and the speedup. In Tables~\ref{tab:1}, \ref{tab:scale}, \ref{tab:2}, \ref{tab:6} and~\ref{tab:E4}, we therefore treat an output that reaches the cap as a repetition loop if its final 64 tokens occur verbatim at least three times in it, without overlap. We stop such an output at the onset of the loop, the first occurrence of that 64-token span, and keep other capped outputs whole. We handle this stop like an EOS stop: the round that crosses the onset counts only its tokens before the onset. We measure time through the end of that round. We cut every method, AR included, at its own onsets. On MT-Bench, the rule applies to each turn separately. The runs of Tables~\ref{tab:F2} and~\ref{tab:E2}, in which every round drafts only the block of the verified outcome, stop only at EOS or the cap.

\textbf{Metrics.} We compute $\tau$ as the committed tokens per verification round, pooled over all rounds of the requests of a benchmark, including the first round of each request. $\tau$ includes the bonus token of each round, and the token that the target produces at prefill belongs to no round. In the last round of a request, $\tau$ counts only the tokens committed before EOS or the generation cap. We compute throughput as the tokens per second, also pooled over the requests of a benchmark. Throughput counts every round after the first of each request. A round is a hit round when it verifies the pre-drafted block of the previous round's outcome, and a miss round otherwise. The hit rate is the share of hit rounds, and $\tau_{\text{hit}}$ and $\tau_{\text{miss}}$ are the values of $\tau$ on the hit and the miss rounds. These three values also exclude the first round of each request, whose block comes from no cache.

\textbf{Serving protocol.}\phantomsection\label{app:serving} For Table~\ref{tab:2}, each method generates responses to the 128 prompts of GSM8K and MATH-500 in Table~\ref{tab:1}, in four groups of 32, until EOS or 2,048 tokens. Each method runs at temperature 0 and batch size 1, with budget 16 (5 for SSD, with the settings of the Baselines paragraph above). \sysname{} runs $J_1$-$R_1$ for Qwen3-8B and $J_3$-$R_3$ for Qwen3.8-27B, with a cache of $B = 4$ outcomes (Appendix~\ref{app:details}). We pool the four groups, weighting $\tau$ by rounds and throughput by time, and the mean times exclude the first round of each request. Each speedup is relative to the AR row of the same target and benchmark.

\textbf{Correctness checks.} In exact arithmetic, verification makes the greedy output of every speculative method equal to that of AR. We run two checks on GSM8K with \sysname{} at $J_1$-$R_1$ for Qwen3-8B, as in Table~\ref{tab:2}. First, we run the target model in fp32. The first 256 tokens that \sysname{} generates for each of the first 8 prompts of the fixed shuffle (2,048 tokens in all) match AR token by token. Second, we check the CUDA graphs of the draft process against eager execution. On the first 3 prompts, \sysname{} replays 492 draft rounds from a CUDA graph, and we recompute each of them eagerly in the same round. No recomputed output differs from the graph's.

\FloatBarrier

\section{Theoretical Analysis}\label{app:theory}

In the three subsections of this appendix, we prove results that support the method of Section~\ref{sec:methods}.
\begin{itemize}[leftmargin=*,topsep=2pt,itemsep=2pt]
\item Appendix~\ref{app:prop1} shows that the decoder score $P(k,a)$ of Eq.~\ref{eq:6} is the exact probability of the verification outcome $(k,a)$ when the decoder is perfect (Section~\ref{sec:42}).
\item Appendix~\ref{app:prop2} relates the selection of Eq.~\ref{eq:8} to the geometric fan-out of SSD and shows that, with a perfect decoder, per-round selection is at least as good as any fixed allocation (Section~\ref{sec:42}).
\item Appendix~\ref{app:refresh} shows that verification stays lossless for every draft, including a draft that anchor replacement rewrites (Section~\ref{sec:43}).
\end{itemize}

\subsection{Exact Outcome Probabilities with a Perfect Decoder}\label{app:prop1}

Outcome selection (Eq.~\ref{eq:8}) ranks the outcomes $(k,a)$ by the decoder score $P(k,a)$ of Eq.~\ref{eq:6}. We show that, with a perfect decoder, $P(k,a)$ is exactly the probability that verification of the block ends at outcome $(k,a)$, at any temperature and for any draft that is fixed before verification. Eq.~\ref{eq:8} then selects the cache with the largest hit probability.

\textbf{Example.} Take $\gamma=2$. Verification uses the point-mass proposal of Appendix~\ref{app:refresh}, so it accepts each draft token with the target's probability of that token. The target gives $d_1$ probability $0.8$, and after $d_1$ it gives $d_2$ probability $0.5$ and another token $b$ probability $0.3$. Verification ends at $(1,b)$ when it accepts $d_1$ (probability $0.8$), rejects $d_2$ (probability $1-0.5=0.5$) and draws $b$ from the residual (probability $0.3/0.5=0.6$). This gives $0.8\cdot0.5\cdot0.6=0.24$, which equals $P(1,b)=\hat p_1\,\hat p(b\mid1)=0.8\cdot0.3$ for a perfect decoder. The rejection probability cancels against the normalization of the residual, and the proof of Proposition~\ref{prop:1} shows that this cancellation holds for every outcome.

\begin{assumption}[Perfect decoder]\label{assu:1}
\hfill
\begin{enumerate}[label=(\roman*),leftmargin=*,topsep=2pt,itemsep=1pt,parsep=0pt,partopsep=0pt]
\item The draft tokens $d_1,\dots,d_\gamma$ of the block are fixed before the verification of the block starts.
\item The random draws that verify this block are independent of what chose $x$ and the draft.
\item Verification uses the acceptance rule of Section~\ref{sec:prelim} with the point-mass proposal (Appendix~\ref{app:refresh}).
\item The decoder is perfect in that its estimate after prefix $j$ equals the target's next-token distribution at the sampling temperature, $\hat p(\cdot\mid j)=p(\cdot\mid x,d_{\le j})$. In this appendix, $x$ includes the anchor $y_0$, which Eq.~\ref{eq:1} lists separately, and $d_{\le 0}$ is empty, so row $0$ of the decoder estimates $p(\cdot\mid x)$, the target's distribution for $d_1$. At temperature $0$, every $p(\cdot\mid x,d_{\le j})$ is the point mass on the target's argmax, and a fixed rule breaks a tie.
\end{enumerate}
\end{assumption}

Every draft of Sections~\ref{sec:41} to~\ref{sec:43} satisfies conditions~(i) and~(ii) of Assumption~\ref{assu:1}: the initial draft $J_0$ and the fallback block of a miss, the passes of Eq.~\ref{eq:9}, drafts from predicted states, and blocks edited by anchor replacement (Appendix~\ref{app:refresh}).

Let $o^\star=(k^\star,a^\star)$ be the realized outcome of the block, where $k^\star$ is the number of accepted draft tokens and $a^\star$ is the committed bonus token. Let $P(k,a)$ be the score of Section~\ref{sec:42} computed from the draft $d_{1:\gamma}$ of the block, by Eq.~\ref{eq:6} for $k<\gamma$ and by $P(\gamma,a)=P(\gamma)\,\hat p(a\mid\gamma)$ for $k=\gamma$.

\begin{proposition}[Exact outcome distribution]\label{prop:1}
Under Assumption~\ref{assu:1} and at any temperature, $\Pr[o^\star=(k,a)\mid x,d_{1:\gamma}]=P(k,a)$ for every outcome $(k,a)$, with $a\ne d_{k+1}$ when $k<\gamma$.
\end{proposition}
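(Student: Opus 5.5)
The plan is to unroll the left-to-right verification procedure and compute the probability of each outcome $(k,a)$ as a product of independent stage probabilities. Condition~(ii) makes the verification draws independent of $x$ and the draft, and condition~(i) fixes $d_{1:\gamma}$. So, conditional on $(x,d_{1:\gamma})$, the verification is a Markov sequence of accept/reject decisions followed by one residual draw.

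First I will make the point-mass proposal explicit. Under condition~(iii), $q(\cdot\mid x,d_{<i})=\delta_{d_i}$. The acceptance probability of $d_i$ is then $\min\{1,p(d_i\mid x,d_{<i})/1\}=p(d_i\mid x,d_{<i})$. The residual $\max\{p-\delta_{d_i},0\}$ equals $p$ off $d_i$ and $0$ at $d_i$, so after normalization it is
\[
p(a\mid x,d_{<i})/(1-p(d_i\mid x,d_{<i}))\quad\text{for } a\ne d_i .
\]

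Next I will use the chain rule over the stages. The event $o^\star=(k,a)$ with $k<\gamma$ is the intersection of three events:
\begin{itemize}
\item $d_1,\dots,d_k$ are accepted, with probability $\prod_{i\le k}p(d_i\mid x,d_{<i})$;
\item $d_{k+1}$ is rejected, with probability $1-p(d_{k+1}\mid x,d_{\le k})$;
\item the residual draw yields $a$, with probability $p(a\mid x,d_{\le k})/(1-p(d_{k+1}\mid x,d_{\le k}))$.
\end{itemize}
The rejection factor cancels the normalizer, which leaves $\prod_{i\le k}p(d_i\mid x,d_{<i})\,p(a\mid x,d_{\le k})$. For $k=\gamma$ all $\gamma$ tokens are accepted and the bonus is drawn from $p(\cdot\mid x,d_{\le\gamma})$, with no cancellation needed. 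Substituting condition~(iv), $\hat p_i=p(d_i\mid x,d_{<i})$ and $\hat p(a\mid k)=p(a\mid x,d_{\le k})$, gives exactly $P(k,a)$ of Eq.~\ref{eq:6} and its $k=\gamma$ analogue.

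The main obstacle is the degenerate case. If $p(d_{k+1}\mid x,d_{\le k})=1$, the residual is undefined. This always happens at temperature $0$, where $p$ is a point mass. In that case rejection has probability $0$, so $\Pr[o^\star=(k,a)]=0$. The formula also gives $0$, because $p(a\mid x,d_{\le k})=0$ for every $a\ne d_{k+1}$. I will therefore handle this case separately, so the cancellation step never divides by zero.

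At temperature $0$ the same computation applies with point masses and the fixed tie-breaking rule. The outcome is deterministic, and both sides equal the indicator of the greedy outcome. As a sanity check, I will confirm that the $P(k,a)$ sum to $1$ over all outcomes: telescoping over $k$ gives $\sum_k P(k)=1$.
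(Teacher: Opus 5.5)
Your proposal is correct and follows the paper's proof essentially step by step: you use the point-mass acceptance probability $p_i(d_i)$, multiply over independent stages, cancel the rejection factor against the residual's normalizer, and treat the case $p_{k+1}(d_{k+1})=1$ separately, where both sides vanish. One small slip is that at temperature $0$ this degenerate case occurs only when $d_{k+1}$ is the target's argmax (otherwise $p(d_{k+1}\mid x,d_{\le k})=0$ and rejection is certain), not always; this does not affect the argument.
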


\begin{proof}[Proof] Write $p_i(\cdot):=p(\cdot\mid x,d_{<i})$ for the target distribution at draft position $i$, and $\hat p_i:=\hat p(d_i\mid i-1)$. Because the proposal is the point mass on $d_i$, the acceptance rule gives
\begin{equation*}
\Pr[\text{accept }d_i\mid d_{<i}\text{ accepted}]=\min\{1,p_i(d_i)/1\}=p_i(d_i).
\end{equation*}
Each acceptance test uses a fresh draw that is independent of the draft and of the earlier tests, so the acceptance probabilities of successive positions multiply. Let $k<\gamma$, and assume first that the rejection probability at position $k+1$ is positive. Given that $d_1,\dots,d_k$ are accepted, rejection at $k+1$ has probability $1-p_{k+1}(d_{k+1})$, and the bonus token is drawn from the residual $\max\{p_{k+1}(\cdot)-\delta_{d_{k+1}}(\cdot),0\}/[1-p_{k+1}(d_{k+1})]$, where $\delta_{d_{k+1}}$ is the point mass at $d_{k+1}$. Thus, for $a\ne d_{k+1}$,
\begin{equation*}
\Pr[o^\star=(k,a)\mid x,d_{1:\gamma}]=\Big(\prod_{i=1}^{k}p_i(d_i)\Big)[1-p_{k+1}(d_{k+1})]\frac{p_{k+1}(a)}{1-p_{k+1}(d_{k+1})}=\Big(\prod_{i=1}^{k}\hat p_i\Big)\hat p(a\mid k).
\end{equation*}
The last equality applies condition~(iv) of Assumption~\ref{assu:1} to each factor. If instead $p_{k+1}(d_{k+1})=1$, then $p_{k+1}(a)=0$ for every $a\ne d_{k+1}$, and both sides vanish. For $k=\gamma$, every draft token is accepted and the bonus token is drawn from $p_{\gamma+1}(\cdot)$, so
\begin{equation*}
\Pr[o^\star=(\gamma,a)\mid x,d_{1:\gamma}]=\Big(\prod_{i=1}^{\gamma}\hat p_i\Big)\hat p(a\mid\gamma).
\end{equation*}
Both expressions equal $P(k,a)$ of Section~\ref{sec:42}.
\end{proof}

\subsection{Per-Round Selection and the Geometric Fan-Out of SSD}\label{app:prop2}

Section~\ref{sec:42} contrasts the per-round selection of Eq.~\ref{eq:8} with the fixed geometric fan-out of SSD~\citep{kumar2026speculative}, and Proposition~\ref{prop:2} states the two facts behind this contrast. Under SSD's assumptions, a continuous surrogate of Eq.~\ref{eq:8} recovers SSD's geometric series on the lengths that the surrogate's maximizer keeps (part (a)). Selection by each round's posterior is at least as good as any fixed allocation in every round (part (b)). Table~\ref{tab:D3} is the measured counterpart of part (b).

SSD caches $F_k$ anchors for each accepted length $k$ (its Definition~10). It keeps two fixed allocations and uses one of them in each round. The choice depends on whether the block under verification came from the speculation cache or from SSD's fallback after a cache miss, so it is whether the previous round hit the cache.

We model the accepted length $k$ in terms of the acceptance rate $\alpha$ of the draft model. We write the resulting law of $k$ over rounds as $P_\alpha(k)$, to keep it apart from the per-round score $P(k)$ of Eq.~\ref{eq:6}.

\begin{assumption}[Constant acceptance rate and power-law hit rate]\label{assu:2}
The draft model has constant acceptance rate $\alpha\in(0,1)$. Over rounds, each draft token is accepted with probability $\alpha$, independently given the acceptance of the tokens before it, so the realized accepted length $k$ has the law
\begin{equation*}
P_\alpha(k)=\begin{cases}\alpha^k(1-\alpha),&k<\gamma,\\\alpha^\gamma,&k=\gamma.\end{cases}
\end{equation*}

For the surrogate problem of part (a) only, the hit mass captured by $F$ anchors at length $k$, conditional on $k$, is modeled by the surrogate curve $1-F^{-\rho}$ for real $F\ge1$ and some $\rho>0$.
\end{assumption}

\begin{remark}[Source of the assumption]
The acceptance law is the model of SSD's Theorem~12, and the curve is the per-length form of SSD's power-law hit rate (its Definition~11).
\end{remark}

For part (a), we follow SSD and replace Eq.~\ref{eq:8} by a continuous allocation over lengths with the surrogate curve of Assumption~\ref{assu:2}. The curve is zero at $F=1$, so a single anchor at length $k$ captures no hit mass in the surrogate problem. The surrogate problem maximizes $\sum_k P_\alpha(k)\,(1-F_k^{-\rho})$ over $F_k\in\{0\}\cup[1,\infty)$ subject to $\sum_k F_k\le B$, where $B>1$ is the cache size of Eq.~\ref{eq:8} and the hit mass at length $k$ is $0$ when $F_k=0$. A length $k$ is active when $F_k\ge 1$.

For part (b), let $I_t$ be the information that the selection observes in round $t$, such as the outputs of the decoder and the draft model and whether the previous round hit the cache. For an outcome $o=(k,a)$ and the realized outcome $o^\star$, let
\begin{equation*}
\pi_t(o):=\Pr[o^\star=o\mid I_t],\qquad \pi_t(S):=\sum_{o\in S}\pi_t(o)\quad\text{for a cache }S.
\end{equation*}
We call $\pi_t$ the posterior of round $t$. Let $a_{k,s}$ denote the anchor at length $k$ with the $s$-th largest value of $\pi_t(k,\cdot)$, with ties broken by a fixed order on tokens. A fixed allocation $\{F_k\}$ caches $(k,a_{k,1}),\dots,(k,a_{k,F_k})$ at each length $k$. Order the outcomes by decreasing $\pi_t$, with ties broken by smaller $s$, then by smaller $k$. Let $S_t^\star$ be the set of the first $B$ outcomes in this order. This set is a solution of Eq.~\ref{eq:8} with $P$ replaced by $\pi_t$.

Suppose that $I_t$ contains the verified context $x$ and the draft $d_{1:\gamma}$, that the target's draws in round $t$ are independent of $I_t$, and that the decoder is perfect (Assumption~\ref{assu:1}). Then Proposition~\ref{prop:1} gives $\pi_t(k,a)=P(k,a)$, so $S_t^\star$ is a solution of Eq.~\ref{eq:8} itself. With an imperfect decoder, Remark~\ref{rem:est} bounds the loss of the cache of Eq.~\ref{eq:8}.

\begin{proposition}[Adaptive selection and the geometric fan-out]\label{prop:2}
\hfill
\begin{enumerate}[label=(\alph*),leftmargin=*,topsep=2pt,itemsep=2pt]
\item Under Assumption~\ref{assu:2}, every active length $k$ at any maximizer of the surrogate problem receives
\begin{equation*}
F_k=C\alpha^{k/(1+\rho)}\quad(k<\gamma),\qquad F_\gamma=C\alpha^{\gamma/(1+\rho)}(1-\alpha)^{-1/(1+\rho)}.
\end{equation*}

Here $C>0$ is the constant that makes the active $F_k$ sum to $B$, and $C=F_0$ when $\gamma\ge1$ and length $0$ is active.
\item In every round and for every cache $S_t$ chosen from $I_t$ with $|S_t|\le B$, including a fixed allocation $\{F_k\}$ with $\sum_kF_k\le B$ and a pair of fixed allocations chosen by whether the previous round hit the cache,
\begin{equation*}
\pi_t(S_t^\star)\ge \pi_t(S_t).
\end{equation*}

If $\pi_t(k,a_{k,s})$ is the same in every round, then $S_t^\star$ selects the same ranks $(k,s)$ in every round. It is therefore one fixed allocation, and the best fixed allocation ties with it.
\end{enumerate}
\end{proposition}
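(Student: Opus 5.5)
For part (a) I will treat the surrogate problem as a concave resource allocation on a fixed active set and read off the stationarity conditions. First, fix any maximizer $\{F_k\}$ and let $A$ be its set of active lengths. The inactive lengths have $F_k=0$ and contribute nothing.

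Next, I check that the budget binds. Each term $P_\alpha(k)(1-F_k^{-\rho})$ is strictly increasing in $F_k$ on $[1,\infty)$, and $P_\alpha(k)>0$ for every $k$ because $\alpha\in(0,1)$. If $\sum_k F_k<B$, raising any active $F_k$ would strictly improve the objective. If $A$ were empty, then, since $B>1$, setting $F_0=B$ would give a positive value. Hence $A\neq\emptyset$ and $\sum_{k\in A}F_k=B$.

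On the active set, $F_k\ge1$. I will argue via Lagrange/KKT on the restricted problem: maximize $\sum_{k\in A}P_\alpha(k)(1-F_k^{-\rho})$ subject to $\sum_{k\in A}F_k=B$ and $F_k\ge1$. The objective is strictly concave in each coordinate. The main obstacle is the lower constraint $F_k\ge1$. If some active $F_k=1$ sits on its boundary, the stationarity equality could fail for that $k$.

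I would handle this obstacle with a direct exchange argument. At $F_k=1$, the term has value $0$, so length $k$ contributes nothing. Moving its unit of budget to any other active length $j$ with $F_j>1$, or to a lone active length, strictly increases the objective. This contradicts maximality. The exception is the case $|A|=1$, where $F_k=B>1$ anyway. So every active $F_k$ is strictly interior.

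With all active $F_k$ interior, the first-order conditions give $\rho P_\alpha(k)F_k^{-\rho-1}=\mu$ for a common multiplier $\mu$. Solving yields $F_k\propto P_\alpha(k)^{1/(1+\rho)}$. Substituting $P_\alpha(k)=\alpha^k(1-\alpha)$ for $k<\gamma$ and $\alpha^\gamma$ for $k=\gamma$, and absorbing $(1-\alpha)^{1/(1+\rho)}$ into the constant $C$, gives the two stated formulas. Here $C$ is fixed by the budget. When $0\in A$ and $\gamma\ge1$, the formula at $k=0$ reads $F_0=C$.

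For part (b), the plan is the elementary fact that the top-$B$ set maximizes a sum of nonnegative weights over sets of size at most $B$. Outcomes are disjoint events, so $\pi_t(S)=\sum_{o\in S}\pi_t(o)$. For any $S_t$ with $|S_t|\le B$, I will order $S_t$ by decreasing $\pi_t$ and compare it elementwise with $S_t^\star$. The $i$-th largest value in $S_t$ is at most the $i$-th largest value overall, and the remaining weights are nonnegative. This gives $\pi_t(S_t^\star)\ge\pi_t(S_t)$.

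The candidates named in the statement are all caches chosen from $I_t$ with size at most $B$, so the inequality covers them. This includes fixed allocations and the pair of allocations switched on whether the previous round hit the cache, since that bit is part of $I_t$.

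Finally, suppose each $\pi_t(k,a_{k,s})$ is round-independent. Then the ordering by value, with ties broken by $s$ and then by $k$, is the same in every round. So $S_t^\star$ picks the same ranks $(k,s)$ in every round. The set of ranks it picks at each $k$ is a prefix $s=1,\dots,F_k$, because values at fixed $k$ decrease in $s$ and ties break by smaller $s$. Hence $S_t^\star$ is a fixed allocation. By the inequality just proved, it is at least as good as every fixed allocation, so the best fixed allocation ties with it.
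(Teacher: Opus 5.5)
Your proposal is correct and follows essentially the same route as the paper. In part (a) you show the budget binds, use an exchange argument to rule out active $F_k=1$, and derive equal marginals $\rho P_\alpha(k)F_k^{-\rho-1}=\mu$ on the active set; you invoke Lagrange/KKT on the restricted active set where the paper moves a small amount of allocation toward the larger marginal, which is equivalent here because every active $F_k>1$. In part (b) you use the same top-$B$ argument with the rank-prefix structure. The paper additionally proves that a maximizer exists and that the active lengths are the $M$ lengths of largest $P_\alpha(k)$, but the statement as posed needs neither.
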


\textbf{Example.} The surrogate problem of part (a) can drop a length. Take $\alpha=0.6$, $\rho=1$, $\gamma=4$ and $B=10$. The series of part (a) over all lengths $0,\dots,4$ captures hit mass $0.535$. The same series, rescaled over the lengths $\{0,1,2,4\}$, captures $0.567$ and is the maximizer.

\begin{proof}[Proof of (a)] We first show that a maximizer exists and that every active $F_k$ exceeds $1$. We then show that the active lengths have equal marginal gains, which gives the stated series. Finally, we show that the active lengths are a set of $M$ lengths of largest $P_\alpha(k)$ for some $M$.

Each $F_k$ ranges over the compact set $\{0\}\cup[1,B]$ and the objective is continuous, so a maximizer exists. Because $\alpha\in(0,1)$, every $P_\alpha(k)$ is positive. At a maximizer the constraint $\sum_k F_k\le B$ binds, because raising an active $F_k$, or setting one $F_k=B$ when no length is active, strictly raises the objective. An active length with $F_k=1$ adds no hit mass. If another length is active, moving this unit to it raises the objective, and otherwise the binding constraint gives $F_k=B>1$. Hence every active $F_k$ exceeds $1$.

The marginal gain of $P_\alpha(k)(1-F_k^{-\rho})$ in $F_k$ is $P_\alpha(k)\rho F_k^{-\rho-1}$. If two active lengths had unequal marginals, moving a small amount of allocation toward the larger marginal would raise the objective. Hence the marginals of the active lengths equal a common value $\mu$, which gives
\begin{equation*}
P_\alpha(k)\rho F_k^{-\rho-1}=\mu\quad\Longrightarrow\quad F_k\propto P_\alpha(k)^{1/(1+\rho)},
\end{equation*}
and substituting $P_\alpha(k)$ from Assumption~\ref{assu:2} gives the stated series.

An active length has $F_k>1$, so $P_\alpha(k)\rho>\mu$, and a length with $P_\alpha(k)\rho\le\mu$ is therefore inactive. The converse fails: in the example above, $P_\alpha(3)\rho>\mu$ and the maximizer still leaves length $3$ inactive. If $k$ is active, $j$ is inactive and $P_\alpha(j)>P_\alpha(k)$, then setting $F_j:=F_k$ and $F_k:=0$ keeps $\sum_k F_k$ and raises the objective by $(P_\alpha(j)-P_\alpha(k))(1-F_k^{-\rho})>0$. Hence the active lengths are a set of $M$ lengths of largest $P_\alpha(k)$ for some $M$. For each $M\in\{1,\dots,\gamma+1\}$, restricting the series of part (a) to such a set gives one candidate allocation. A maximizer is a best one among the candidates that keep every $F_k\ge1$.
\end{proof}

\begin{proof}[Proof of (b)] Each $S_t=S(I_t)$ is feasible in Eq.~\ref{eq:8}, so $\pi_t(S_t^\star)\ge\pi_t(S_t)$ pointwise, and the inequality also holds in expectation over rounds. If $\pi_t(k,a_{k,s})$ does not depend on $t$, the order that defines $S_t^\star$ selects the same ranks $(k,s)$ in every round. Because $\pi_t(k,a_{k,s})$ is nonincreasing in $s$ and ties prefer smaller $s$, the selected ranks at length $k$ are the top $F_k$ ranks, which form a fixed allocation.
\end{proof}

\begin{remark}[Relation of the proposition to SSD]
The allocation of part (a) is the geometric series of SSD restricted to the active lengths, and it equals the fan-out of SSD's Theorem~12 when every length $k=0,\dots,\gamma$ is active. SSD derives the equal-marginal condition of part (a) in its Theorem~18, but its statement has no inactive lengths. SSD fills each length with the top anchors of the draft model's logits. The resulting cache is chosen from $I_t$, so part (b) covers it. This cache can fall strictly below $S_t^\star$.
\end{remark}

\begin{remark}[Estimated scores]\label{rem:est}
In serving, the draft process ranks outcomes by an estimate $\hat\pi_t$ of $\pi_t$. For any cache $\hat S_t$ of at most $B$ outcomes, let
\begin{equation*}
\nu_t:=\max_{|S|\le B}\big|\pi_t(S)-\hat\pi_t(S)\big|, \qquad \zeta_t:=\max_{|S|\le B}\hat\pi_t(S)-\hat\pi_t(\hat S_t).
\end{equation*}
Then $\pi_t(S_t^\star)-\pi_t(\hat S_t)\le2\nu_t+\zeta_t$ in every round. The bound holds because $\nu_t$ bounds the estimation error at both $S_t^\star$ and $\hat S_t$, and $\zeta_t$ bounds the estimated gap between $S_t^\star$ and $\hat S_t$.
\end{remark}

\subsection{Lossless Verification}\label{app:refresh}

This subsection shows why the output stays lossless for every draft, including one that anchor replacement rewrites.

\sysname{} determines every draft token before verification. Each token is an argmax of the draft model, from the initial draft $J_0$ (also the fallback block of a miss) or a pass of Eq.~\ref{eq:9}, or it is the decoder's token that anchor replacement writes. The noise tokens that $J_0$ reads are drawn independently of the target's draws in the round that verifies the block.

Given the draft, the proposal at position $i$ is the point mass $q_i=\delta_{d_i}$. Let $p_i=p(\cdot\mid x,d_{<i})$ be the target's distribution at the sampling temperature. With this proposal, the acceptance rule of Section~\ref{sec:prelim} accepts $d_i$ with probability $p_i(d_i)$. On a rejection, the rule samples from $p_i$ with $d_i$ removed and renormalized. When $p_i(d_i)=1$, a rejection has probability $0$ and the rule never uses this residual. This rule is equivalent to drawing $v\sim p_i$, accepting $d_i$ if and only if $v=d_i$, and committing $v$ otherwise. DARTree~\citep{li2026dartree} describes this exact-match verification for DFlash~\citep{chen2026dflash}. Every committed token is thus the target's own draw, at any temperature and whichever model produced the draft token.

Using the draft model's softmax as $q$ for an argmax token would break this guarantee, because the token is not drawn from that softmax. For example, if the softmax puts $0.9$ on $d_i$ and the target puts $0.6$ on it, the rule with that $q$ commits $d_i$ with probability $2/3$ instead of $0.6$.

\end{document}